\documentclass{article}

\usepackage{PRIMEarxiv}
\usepackage{graphicx}
\usepackage{booktabs}
\usepackage{amsmath,amssymb,amsfonts,amsthm}
\usepackage{microtype}
\usepackage{enumitem}
\usepackage{float}
\usepackage[utf8]{inputenc} 
\usepackage[T1]{fontenc}    
\usepackage{hyperref}       
\usepackage{url}            
\usepackage{booktabs}       
\usepackage{amsmath,amssymb,amsfonts}  
\usepackage{nicefrac}       
\usepackage{microtype}      
\usepackage{multirow}       
\usepackage{tabularx}       
\usepackage{xcolor}         
\usepackage{fancyhdr}       
\usepackage{graphicx}       
\usepackage[numbers]{natbib} 
\graphicspath{{./}{media/}} 
\newtheorem{theorem}{Theorem}[section]
\newtheorem{proposition}[theorem]{Proposition}
\newtheorem{lemma}[theorem]{Lemma}
\newtheorem{corollary}[theorem]{Corollary}
\theoremstyle{definition}

\newcommand{\R}{\mathbb{R}}

\newcommand{\norm}[1]{\left\lVert #1\right\rVert}

\newcommand{\fip}[2]{\left\langle #1,#2\right\rangle_F}
\newcommand{\tr}{\operatorname{tr}}
\newcommand{\rms}{\operatorname{RMS}}
\newcommand{\softmax}{\operatorname{softmax}}
\newcommand{\onehot}{\operatorname{onehot}}
\newcommand{\vecop}{\operatorname{vec}}

\allowdisplaybreaks
\title{LoCA: Forward-Only LLM Tuning after One-Shot Calibration with Local Credit Assignment
}

\author{
  Linhan Xia\textsuperscript{1,4,\textdagger}\quad
  Rui Liu\textsuperscript{2,\textdagger}\quad
  Zhaofeng Zhang\textsuperscript{3,5,\textdagger}\quad
  Yihao Wang\textsuperscript{6}\quad
  Binrui Shen\textsuperscript{7}\quad 
  Shengxin Zhu\textsuperscript{7,8}* \\
  \textsuperscript{1}University of Oklahoma \quad
  \textsuperscript{2}Imperial College London \quad
  \textsuperscript{3}University of Michigan \\
  \textsuperscript{4}Tencent \quad
  \textsuperscript{5}University of Edinburgh \quad
  \textsuperscript{6}University of Southern California \\
  \textsuperscript{7}Beijing Normal University \quad
  \textsuperscript{8}Beijing Normal-Hong Kong Baptist University \\
  \textsuperscript{\textdagger}These authors contributed equally to this work. 
  * Corresponding Author: \texttt{shengxin.zhu@bnu.edu.cn}
}

\begin{document}
\maketitle

\begin{abstract}
Parameter-efficient post-training reduces the number of trainable parameters, but still requires repeated end-to-end backpropagation through the frozen backbone. Every adaptation step therefore needs backward-capable hardware and must store or recompute activations. We ask whether this repeated backward chain can be replaced by a one-time calibration. We introduce Local Credit Assignment (LoCA), a two-stage method for small-shift adaptation. One probe backward pass fits a low-rank map at each transformer block from the final prediction error to a local hidden-state correction. LoCA then reuses these maps to form blockwise regression targets from forward activations and fits low-rank adapters with closed-form ridge solves. No further backbone backward pass is required. We evaluate LoCA on five discriminative benchmarks with Qwen2.5 models from 0.5B to 14B. In 16 of 25 reported task--scale comparisons, LoCA yields lower evaluation cross-entropy than the corresponding LoRA run. Its measured full-run GPU peak, including calibration, is 26--29\% lower than LoRA's. After calibration, its CPU steady-state memory is 36--52\% lower and its per-pass time is 43--48\% lower. A shared scale-normalized candidate set is reused across all tested Qwen2.5 sizes and on SmolLM2-1.7B. LoCA thus amortizes global credit assignment into one calibration and enables later forward-only tuning when repeated backpropagation is impractical. The code associated with this paper is available \href{https://github.com/Xia12121/LoCA}{here}.
\end{abstract}

\keywords{Backpropagation-Free \and Feedback Alignment \and Parameter-Efficient Tuning \and Resource-Constrained Training}

\section{Introduction}\label{sec:Introduction}
Post-training includes instruction tuning \citep{ouyang2022training}, domain adaptation \citep{gururangan2020dont}, and preference alignment \cite{bai2022constitutional}. Even when only adapters are updated, these methods usually backpropagate through the frozen backbone. They must store or recompute activations, propagate gradients, and maintain optimizer states \citep{kingma2015adam}. Fine-tuning can therefore use up to $12\times$ the memory of inference \citep{malladi2023mezo}. The need for repeated backpropagation substantially increases the cost, making the method less practical than forward-only inference.

Parameter-efficient fine-tuning reduces trainable parameters and optimizer states, but it retains the backward chain. LoRA \citep{hu2022lora} and adapter- or prompt-based methods \citep{houlsby2019adapters,li2021prefix,lester2021prompt} still propagate gradients through the backbone. For example, PEFT for a 13B model can require $6\times$ inference memory \citep{malladi2023mezo}. These methods reduce several costs of backpropagation, but do not remove repeated backward execution.

Forward-only methods offer another trade-off. MeZO estimates update directions from function values \citep{spall1992spsa,malladi2023mezo}. Its memory use is close to inference, while its convergence and tuning can be affected by estimator variance \citep{duchi2015optimal,nesterov2017random}. Feedback alignment uses fixed feedback matrices \citep{lillicrap2016random,nokland2016dfa}, but often relies on forward weights adapting to those matrices \citep{refinetti2021align}. A frozen backbone cannot use this mechanism. Local learning avoids global propagation \citep{belilovsky2019greedy,nokland2019local,hinton2022forward}, but its targets usually support representation learning or compression \citep{jiao2020tinybert}. Taken together, this trade-off motivates us to combine their complementary strengths by using a proper scheduling.

We study small, targeted changes to a strong pretrained model, a more common practical setting than training a model from scratch. Our goal is to avoid the high computational cost in this regime. Prior work uses low-rank parameter updates for this setting \citep{li2018intrinsic,aghajanyan2021intrinsic,hu2022lora}. We apply the same local view to credit assignment. When the representation change is small, we separate two tasks. The first estimates how each block output should change. The second fits adapter parameters to that change.

We introduce \textbf{Local Credit Assignment fine-tuning (LoCA)} for this two-stage setting. During calibration, one probe backward pass fits a low-rank feedback operator for each block. During adaptation, these operators turn the final-layer error into local adapter targets. Each adapter is fitted by ridge regression. After calibration, an outer iteration needs forward execution and local linear solves, but no backbone backward pass. A device without backward support must receive operators calibrated for the same checkpoint and target distribution.

In the tested discriminative tasks, LoCA recovers most of LoRA's cross-entropy improvement and uses less measured memory. We also reuse one normalized candidate set across model sizes and on a second model family. Results from individual cells are treated as observations, not as evidence that LoCA is generally better than LoRA.

Our contributions are as follows.
\begin{enumerate}
  \item \textbf{Calibrated local credit.}
  LoCA reuses feedback operators from one probe backward pass during a later forward-only tuning stage.

  \item \textbf{Closed-form local updates.}
  Each residual adapter is fitted by a ridge solve. A scale-normalized target lets us reuse the same candidate set across the tested models.

  \item \textbf{Empirical study.}
  We compare quality, memory, and iteration count with LoRA and matched-adapter MeZO on five benchmarks and 0.5B--14B models.
\end{enumerate}

This paper is organized as follows. Section~\ref{sec:Related Works} reviews related work on efficient fine-tuning and local training methods. Section~\ref{sec:Methodology} presents the LoCA framework and its closed-form local update procedure. Section~\ref{sec:Experiment} evaluates LoCA against representative baselines and analyzes its efficiency and limitations. Section~\ref{sec:Discussion} discusses the scope of the method, and Section~\ref{sec:Conclusion} concludes the paper.

\section{Related Work}\label{sec:Related Works}
\textbf{Parameter- and memory-efficient fine-tuning.} Adapters \citep{houlsby2019adapters}, prompt methods \citep{li2021prefix,lester2021prompt}, and LoRA \citep{hu2022lora} reduce the number of trainable parameters. Other methods reduce memory in different ways. GaLore projects gradients \citep{zhao2024galore}, LOMO combines gradient computation and parameter updates \citep{lv2024lomo}, QLoRA quantizes frozen weights \citep{dettmers2023qlora}, and checkpointing recomputes activations \citep{chen2016checkpoint}. BAdam updates blocks in sequence \citep{luo2024badam}. These methods reduce parts of the training cost, but still use backward differentiation through at least part of the model. LoCA keeps a low-rank adapter but changes how its update is obtained.

\textbf{Zeroth-order and forward-only optimization.} Zeroth-order methods estimate update directions from function values \citep{spall1992spsa}. MeZO applies this idea to LLM fine-tuning with inference-level memory \citep{malladi2023mezo}. Recent work uses low-rank \citep{yu2025subzero,chen2025lozo}, sparse \citep{liu2024sparsemezo}, or curvature-aware perturbations \citep{zhao2025hizoo}. These methods retain the memory benefit of forward-only execution, but their iteration count and tuning can depend on estimator variance \citep{duchi2015optimal,nesterov2017random}. We compare with Adapter-MeZO, which perturbs the same low-rank adapter parameters as LoCA. LoCA uses fixed local solves rather than stochastic perturbation updates.

\textbf{Feedback alignment.} Feedback alignment trains networks without exact transposed weights \citep{lillicrap2016random}. Direct feedback alignment sends the top-layer error to each layer \citep{nokland2016dfa}, although transformers remain difficult \citep{launay2020dfa}. Its success often depends on forward weights adapting to the fixed feedback \citep{refinetti2021align}. This option is limited when the backbone is frozen. LoCA instead fits the feedback once to the frozen model and then keeps it fixed.

\textbf{Local and layer-wise learning.} Local learning uses separate objectives for different layers \citep{belilovsky2019greedy,nokland2019local,hinton2022forward}. Forward gradients have also been combined with local losses \citep{ren2023scaling}. Most of these targets support representation learning or compression \citep{jiao2020tinybert}. LoCA keeps the blockwise structure, but forms its targets from a first-order change in the post-training loss. It then fits each adapter with a ridge solve.

\section{Methodology}\label{sec:Methodology}
LoCA assumes that adaptation stays near the calibrated model state. It separates credit calibration from adapter fitting. Calibration uses one probe backward pass to fit low-rank maps from the top-layer error to blockwise credit signals. After calibration, each outer iteration uses no backbone backward pass. The block parameter is found by solving a local ridge objective. This solve is exact for a fixed local target. It does not imply that global cross-entropy decreases at every outer iteration. Figure~\ref{fig:lora_loca_overview} shows the post-calibration loop.

\begin{figure*}[t]
  \centering
  \includegraphics[width=0.80\textwidth]{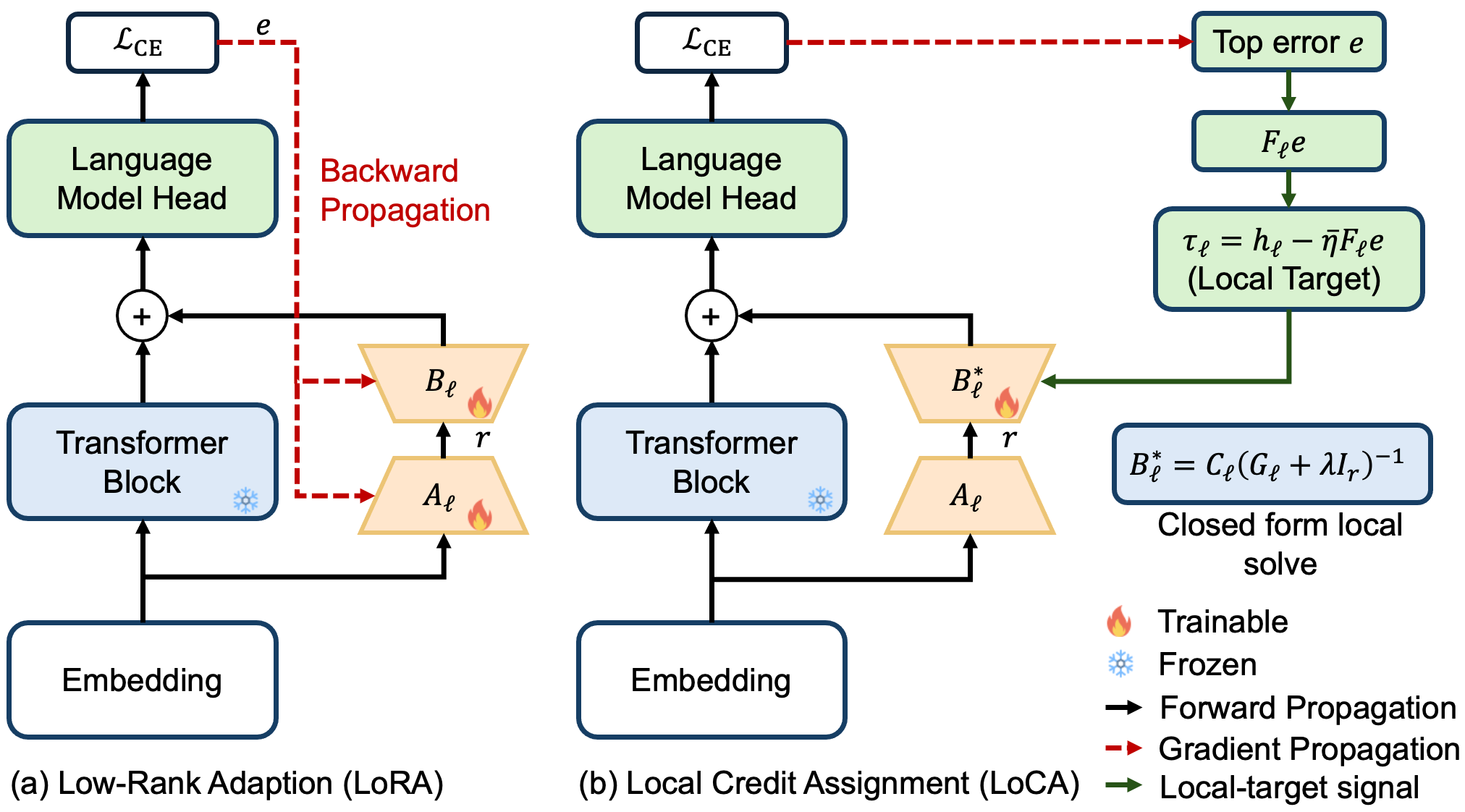}
  \caption{LoRA and the post-calibration LoCA loop. One probe backward pass fits $F_\ell$ before this loop. Later iterations use forward propagation and local ridge solves.}
  \label{fig:lora_loca_overview}
\end{figure*}

\subsection{Local Targets}\label{sec:targets}

Following the setting of LoRA \cite{hu2022lora}, we consider a pre-trained transformer with \(L\) frozen blocks. Let \(h_\ell \in \mathbb{R}^d\) be the hidden state after block \(\ell\), with the hidden embedding \(h_0\). A frozen head \(W_u \in \mathbb{R}^{V \times d}\) maps \(h_L\) to logits \(z=W_u h_L\) over vocabulary of size \(V\). We attach a low-rank adapter to each block. The adapted hidden state is
\begin{equation}
  h_\ell = \bar{h}_\ell + B_\ell A_\ell h_{\ell-1},
  \label{eq:adapter}
\end{equation}
where \(\bar{h}_\ell\) is the frozen block output, \(A_\ell \in \mathbb{R}^{r \times d}\) is a fixed random projection, and \(B_\ell\in \mathbb{R}^{d \times r}\) is fitted. Setting \(B_\ell=0\) recovers the frozen model. For one block solve, the current forward states and all other adapters are fixed. Thus, $\bar{h}_\ell$ and $p_\ell = A_\ell h_{\ell-1}$ are fixed with respect to $B_\ell$. The state \(h_\ell\) is then linear in \(B_\ell\), which makes the block objective a convex quadratic.

Unlike LoRA \cite{hu2022lora}, LoCA does not update every adapter from one global CE gradient. It gives each block a regression target \(\tau_\ell \in \mathbb{R}^d\) and solves
\begin{equation}
  \min_{B_\ell}\;\mathbb{E}_x \bigl\|\, h_\ell - \tau_\ell \,\bigr\|^2
  \label{eq:per-block}
\end{equation}
for each block. The post-calibration solve does not propagate gradients across blocks. The remaining question is how to choose $\tau_\ell$.

A perturbation \(\delta h_\ell\) changes the loss by \(\delta\mathcal{L} \approx \langle g_\ell,\, \delta h_\ell \rangle\), where \(g_\ell = \partial\mathcal{L}/\partial h_\ell\). A first-order target is
\begin{equation}
    \hat{\tau}_\ell = h_\ell - \eta\, g_\ell.
    \label{eq:target}
\end{equation}
In the conventional setting, computing \(g_\ell\) at every optimization step requires the repeated backward passes we aim to eliminate. We approximate it during tuning by replacing the input-dependent Jacobian product with a fixed matrix $F_\ell \in \mathbb{R}^{d \times d}$:
\begin{equation}
  \tau_\ell \;=\; h_\ell \;-\; \eta\, F_\ell\, e,
  \label{eq:target1}
\end{equation}
where \(e=W_u^{\top}\bigl(\mathrm{softmax}(z) - \mathrm{onehot}(y)\bigr)\) is the top-layer error from the frozen head. Once $F_\ell$ is fixed, this target does not require a backbone backward pass.

\subsection{The Feedback Operator}\label{sec:feedback}
The operator $F_\ell$ maps the top-layer error $e$ to an estimate of $g_\ell$. Random feedback gives a mean cosine alignment near zero in our calibration test, $\alpha_\ell = \cos\angle(F_\ell e, g_\ell) \approx -0.005$. It therefore provides little useful direction in this setting. Feedback alignment can improve as forward weights change \citep{refinetti2021align}, but the LoCA backbone is frozen. We instead fit $F_\ell$ to the frozen model.

We fit an untruncated feedback map $\widehat F_\ell$ once from samples of the backward mapping. One probe backward pass through the frozen model yields per-token gradients $g_\ell$ and top-layer errors $e$, stacked as $\widetilde{G}_\ell$ and $\widetilde{E}$. We solve
\begin{equation}
  \widehat F_\ell
  =
  \arg\min_{F}\;
  \big\| F\widetilde{E}-\widetilde{G}_\ell \big\|_F^2
  +\beta\|F\|_F^2,
  \label{eq:sketch}
\end{equation}
where $\beta>0$. Writing its singular value decomposition as
$\widehat F_\ell=U\Sigma V^\top$, we retain
$F_\ell=U_k\Sigma_kV_k^\top$ with $k=8$.
Thus, $F_\ell$ is a rank-$k$ approximation to the regularized fit rather than the full fitted map. On the probe data, its cosine alignment with the true hidden-state gradients is $0.3$--$0.5$. We treat this fixed map as a local approximation whose alignment may degrade as repeated updates move the model away from the calibrated state. In the small-shift regime, however, it can carry useful backward-derived credit into forward-only adaptation without requiring further backward passes.

\subsection{Closed-Form Per-Block Solve}\label{sec:solve}
For a fixed \(\tau_\ell\), each block gives a ridge problem. Its residual target is
\begin{equation}
    \rho_\ell := \tau_\ell - \bar{h}_\ell = B_\ell^{(t)} p_\ell - \eta\, F_\ell e,
    \label{eq:residual}
\end{equation}
where $B_\ell^{(t)}$ is the current adapter. Over $N$ predicted tokens, block $\ell$ minimizes
\begin{equation}
    \sum_n \| B_\ell p_\ell^{(n)} - \rho_\ell^{(n)} \|^2 + \lambda \|B_\ell \|_F^2.
\end{equation}
This strictly convex problem has the unique closed-form minimizer
\begin{equation}
  B_\ell^{\star} = C_\ell \big( G_\ell + \lambda I_r \big)^{-1},
  \label{eq:solution}
\end{equation}
where \(\lambda>0\) is the ridge regularization coefficient, and
\begin{equation}
  G_\ell = \sum_n p_\ell^{(n)} p_\ell^{(n)\top},
  \qquad
  C_\ell = \sum_n \rho_\ell^{(n)} p_\ell^{(n)\top}.
  \label{eq:gram}
\end{equation}
The block parameter uses no gradient optimizer, learning-rate schedule, or optimizer state. The coefficient $\eta$ remains a target-size hyperparameter.

Three details are useful in practice. First, $G_\ell$ and $C_\ell$ are additive over batches. Their persistent storage is $O(L(r^2+dr))$ and does not grow with the number of training tokens. Calibration creates one temporary backward graph. Its peak is included in the GPU measurements in Table~\ref{tab:resource}. Second, we keep the frozen model as a candidate and return it when no adapted snapshot improves held-out CE. This fallback protects the selected validation objective, not unseen test metrics. Third, the block solve is deterministic when the probe data, projection, data order, and numerical implementation are fixed. These choices can still cause variation across runs.

\begin{table*}[!t]
\centering
\setlength{\tabcolsep}{4.5pt}

\begin{tabular*}{\textwidth}{@{\extracolsep{\fill}}llcccccc@{}}
\toprule
\textbf{Model} & \textbf{Task}
& \textbf{Frozen}
& \textbf{LoRA}
& \textbf{Adapter-MeZO}
& \textbf{LoCA}
& $\boldsymbol{R_{\mathrm{CE}}}$
& $\boldsymbol{R_{\mathrm{acc}}}$ \\
\midrule

\multirow{5}{*}{0.5B}
     & SST-2      & 2.463 & 0.261 & \textbf{0.183} & 0.399          & 0.94          & 0.96 \\
     & BoolQ      & 0.827 & 0.409 & \textbf{0.315}         & 0.363 & \textbf{1.11} & 0.82 \\
     & ARC-Easy   & 2.990 & 2.210 & 2.319          & \textbf{2.123} & \textbf{1.11} & 0.79 \\
     & OpenBookQA & 4.707 & 4.264 & 3.837          & \textbf{3.645} & \textbf{2.40} & 0.85 \\
     & HellaSwag  & 3.450 & 2.920 & 3.050          & \textbf{2.860} & \textbf{1.11} & 0.72 \\

\midrule

\multirow{5}{*}{1.5B}
     & SST-2      & 3.894 & \textbf{0.112} & 0.204 & 0.448          & 0.91          & 0.87 \\
     & BoolQ      & 0.527 & 0.399          & 0.276 & \textbf{0.304} & \textbf{1.74} & 0.82 \\
     & ARC-Easy   & 2.398 & 2.156
                          & 2.065
                          & \textbf{1.851}
                          & \textbf{2.26} & 0.77 \\
     & OpenBookQA & 4.120 & 3.700          & 3.910 & \textbf{3.620} & \textbf{1.19} & 0.74 \\
     & HellaSwag  & 2.917 & 2.655          & 2.717 & \textbf{2.635} & \textbf{1.08} & 0.82   \\

\midrule

\multirow{5}{*}{3B}
     & SST-2      & 8.294 & \textbf{0.184} & 0.213 & 0.283          & 0.98          & 0.96 \\
     & BoolQ      & 1.914 & 0.325          & 0.287 & \textbf{0.269} & \textbf{1.04} & 0.85 \\
     & ARC-Easy   & 2.621 & 2.147          & 1.984 & \textbf{1.859} & \textbf{1.61} & \textbf{1.31} \\
     & OpenBookQA & 3.980 & \textbf{3.550} & 3.900 & 3.610          & 0.86          & 0.69 \\
     & HellaSwag  & 2.740 & \textbf{2.480} & 2.690 & 2.520          & 0.85          & 0.71 \\

\midrule

\multirow{5}{*}{7B}
     & SST-2      & 5.200 & \textbf{0.160} & 0.420          & 0.205          & 0.99          & 0.93 \\
     & BoolQ      & 0.819 & 0.426 & 0.293          & \textbf{0.253} & \textbf{1.44} & $\ddagger$ \\
     & ARC-Easy   & 2.442 & 2.086 & 1.754          & \textbf{1.721} & \textbf{2.03} & 0.87 \\
     & OpenBookQA & 3.860 & 3.380 & 3.790          & \textbf{3.320} & \textbf{1.13} & 0.78 \\
     & HellaSwag  & 2.610 & \textbf{2.340} & 2.580          & 2.390          & 0.81          & 0.84 \\

\midrule

\multirow{5}{*}{14B}
     & SST-2      & 4.850 & 0.135          & 0.510 & \textbf{0.128} & \textbf{1.00} & 0.97 \\
     & BoolQ      & 1.445 & 0.426          & \textbf{0.183} & 0.221 & \textbf{1.20} & $\ddagger$ \\
     & ARC-Easy   & 2.444 & 2.438$^\dagger$ & \textbf{1.474} & 1.642 & $\dagger$      & 0.95 \\
     & OpenBookQA & 3.740 & \textbf{3.220} & 3.760 & 3.290          & 0.87          & 0.70 \\
     & HellaSwag  & 2.500 & \textbf{2.200} & 2.520 & 2.240          & 0.87          & 0.88 \\
\bottomrule
\end{tabular*}

\begin{flushleft}
\footnotesize
$^\dagger$ indicates that the corresponding recovery ratio is not meaningful because the LoRA improvement over the frozen model is negligible.
$\ddagger$ indicates that the accuracy recovery is omitted because the LoRA accuracy gain is zero or near zero.
\end{flushleft}

\caption{Main benchmark results. We report evaluation CE, where lower is better, together with CE and accuracy recovery ratios.}
\label{tab:quality}
\end{table*}

\subsection{Full Algorithm}
\label{sec:outer}
The closed-form solution above gives $B_\ell^\star$ for a fixed target. The target changes after an adapter update because later hidden states and the error $e$ also change. LoCA therefore repeats the procedure for $T$ outer steps. Each step runs a forward pass, updates the targets, and solves the block objectives. A Jacobi schedule uses one shared forward pass for all blocks. A Gauss--Seidel schedule refreshes later states after each block. Each solve minimizes its current local objective. Global CE can still be non-monotone, so we select $T$ by held-out CE.

The step size \(\eta\) in~\eqref{eq:target1} is an absolute quantity, and the norm of \(e\) varies across model sizes. In practice, a fixed \(\eta\) that works at 0.5B returns the frozen model at 14B. We reduce this sensitivity by normalizing the target correction to the residual-stream scale:
\begin{equation}
  \tau_\ell = h_\ell - \bar{\eta}\, F_\ell\, e,
  \qquad
  \bar{\eta} = \eta \cdot
  \frac{\mathrm{RMS}(h_{\ell-1})}{\mathrm{RMS}(F_\ell\, e)},
  \label{eq:scalenorm}
\end{equation}
so $\eta$ is the relative change to the residual stream. We reuse the same candidate set across all tested model sizes and architectures.

\section{Experiments}\label{sec:Experiment}
\subsection{Setup}\label{sec:exp-setup}
\textbf{Models and tasks.} We test Qwen2.5 at 0.5B, 1.5B, 3B, 7B, and 14B parameters \citep{yang2024qwen25}. We also test SmolLM2-1.7B \citep{allal2025smollm2}. The tasks are SST-2 \citep{socher2013recursive}, BoolQ \citep{clark2019boolq}, ARC-Easy and ARC-Challenge \citep{clark2018arc}, OpenBookQA \citep{mihaylov2018obqa}, and HellaSwag \citep{zellers2019hellaswag}.

\textbf{Baselines and metrics.} We compare with the frozen model, LoRA \citep{hu2022lora}, and MeZO \citep{malladi2023mezo}. Full-parameter MeZO perturbs a much larger parameter space and weakens with model size in supplementary Figure~S1. Our main comparison therefore uses Adapter-MeZO, which perturbs the same low-rank adapter parameters as LoCA and LoRA. We report per-token CE and ranking accuracy. Recovery gives a normalized summary of the frozen-to-LoRA gap.

Table~\ref{tab:quality} reports the results.

\begin{equation}
  R_{\mathrm{CE}}
  =
  \frac{
    \mathrm{CE}_{\mathrm{frozen}} - \mathrm{CE}_{\mathrm{ours}}
  }{
    \mathrm{CE}_{\mathrm{frozen}} - \mathrm{CE}_{\mathrm{LoRA}}
  },
  \label{eq:recovery-ce}
\end{equation}
\begin{equation}
  R_{\mathrm{acc}}
  =
  \frac{
    \mathrm{acc}_{\mathrm{ours}} - \mathrm{acc}_{\mathrm{frozen}}
  }{
    \mathrm{acc}_{\mathrm{LoRA}} - \mathrm{acc}_{\mathrm{frozen}}
  },
  \label{eq:recovery-acc}
\end{equation}
so $R=1$ matches the reported LoRA value. Recovery can be unstable when the frozen-to-LoRA gap is near zero. We therefore treat raw CE and accuracy as the main metrics and use recovery as a descriptive summary. We omit ratio claims for near-zero denominators.

\textbf{Protocol.} LoCA uses rank $r=32$, at most 40 outer iterations, and early stopping on held-out CE. We select $\eta$ from the stated candidate set on the same held-out metric. MeZO scales its learning rate by $1/\sqrt{D}$ and searches it at each model size. Unless noted otherwise, table cells are single runs with seed 0. We interpret cell-level differences as descriptive rather than statistically significant. Runs use NVIDIA L40S GPUs or 32 GB CPU-only servers. We do not include GaLore or LOMO because they retain backward differentiation and address a different cost trade-off.

\setcounter{topnumber}{1}
\begin{table}[!t]
\centering
\small
\setlength{\tabcolsep}{4.5pt}
\begin{tabular*}{\columnwidth}{@{\extracolsep{\fill}}lcccc@{}}
\toprule
 & Adapter-MeZO & \textbf{LoCA} & LoRA & LoCA/LoRA \\
\midrule
\multicolumn{5}{@{}l}{\emph{GPU peak training memory (MB)}} \\
0.5B & \textbf{2{,}197}  & 4{,}420  & 6{,}254  & $0.71\times$ \\
1.5B & \textbf{5{,}669}  & 6{,}646  & 9{,}109  & $0.73\times$ \\
3B   & \textbf{9{,}079}  & 9{,}248  & 12{,}612 & $0.73\times$ \\
7B   & \textbf{9{,}154}  & 11{,}520 & 15{,}647 & $0.74\times$ \\
\midrule
\multicolumn{5}{@{}l}{\emph{CPU steady-state memory (MB)}} \\
0.5B & \textbf{879}  & 1{,}279 & 2{,}664 & $0.48\times$ \\
1.5B & \textbf{1{,}741} & 2{,}845 & 5{,}633 & $0.51\times$ \\
3B   & \textbf{2{,}015} & 2{,}590 & 4{,}875 & $0.53\times$ \\
7B   & \textbf{3{,}533} & 4{,}186 & 6{,}582 & $0.64\times$ \\
\midrule
\multicolumn{5}{@{}l}{\emph{CPU wall-clock per pass (s)}} \\
0.5B & 8.9  & \textbf{5.9}  & 10.3 & $0.57\times$ \\
1.5B & 26.0 & \textbf{14.2} & 25.5 & $0.56\times$ \\
3B   & 28.7 & \textbf{15.0} & 29.0 & $0.52\times$ \\
7B   & 55.4 & \textbf{29.5} & 55.2 & $0.53\times$ \\
\bottomrule
\end{tabular*}
\caption{Resources beyond the loaded model. GPU peak includes probe calibration. CPU steady-state memory and per-pass time describe post-calibration tuning.}
\label{tab:resource}
\end{table}
\subsection{Main Results}\label{sec:exp-quality}

Experiment~1 compares the frozen model, LoRA, Adapter-MeZO, and LoCA on five model sizes and five tasks.

LoCA's CE recovery ranges from 0.81 to 2.40 in the reported cells. It obtains lower CE than the corresponding LoRA run in 16 of the 25 task--scale cells. These cells are single runs, so the count is descriptive and does not show general superiority over LoRA.

On BoolQ at 7B and 14B, the reported LoRA runs lower CE while ranking accuracy falls from 0.824 to 0.636 and from 0.852 to 0.364. LoCA reaches 0.840 and 0.864 in these cells. This CE--accuracy gap is consistent with completion-format overfitting, but the current experiment does not isolate that cause.

\begin{table}[t]
\centering
\small
\setlength{\tabcolsep}{4.0pt}
\begin{tabular*}{\columnwidth}{@{\extracolsep{\fill}}llccc@{}}
\toprule
Model & Task & abs-$\eta$ $R_{\mathrm{CE}}$ & rms CE & rms $R_{\mathrm{CE}}$ \\
\midrule
0.5B & SST-2 & 0.94 (tuned) & 0.249 & \textbf{1.02} \\
1.5B & SST-2 & 0.91 (tuned) & 0.168 & \textbf{1.00} \\
3B & SST-2 & 0.89 (tuned) & 0.148 & \textbf{0.92} \\
7B   & SST-2 & 0.82 (tuned) & 0.152 & \textbf{0.94} \\
\midrule
SmolLM2 & SST-2 & ${\approx}0$ & 0.371 & \textbf{0.93} \\
SmolLM2 & BoolQ & 0.06 & 0.391 & \textbf{1.02} \\
SmolLM2 & ARC-Easy & 0.07 & 1.677 & \textbf{1.29} \\
SmolLM2 & ARC-Chal. & 0.02 & 2.318 & \textbf{0.93} \\
\bottomrule
\end{tabular*}
\caption{Scale-normalized targets with one shared candidate set across the tested models.}
\label{tab:scalenorm}
\end{table}

\subsection{Resource Use}\label{sec:exp-resource}
Table~\ref{tab:resource} reports memory beyond the loaded model. LoCA uses $0.71$--$0.74\times$ the GPU peak memory of LoRA and $0.48$--$0.64\times$ its CPU steady-state memory. The GPU peak is measured over the full LoCA run and includes the temporary graph used for calibration. After calibration, LoCA stores the streaming statistics $G_\ell\in\mathbb{R}^{r\times r}$ and $C_\ell\in\mathbb{R}^{d\times r}$. Their size is $O(L(r^2+dr))$ and does not grow with the number of training tokens. On BoolQ at length 512, LoRA reaches 32.3\,GB and LoCA reaches 16.3\,GB. LoRA is killed on the 32\,GB device, while LoCA completes the run.

Adapter-MeZO uses the least memory in the table. It perturbs parameters in place and stores no gradient graph or optimizer state. Its update estimate is stochastic and needs $10^3$--$10^4$ perturbation steps in our runs, together with a learning-rate search at each scale.

After calibration, one LoCA pass costs about one forward pass. On CPU, its measured per-pass time is $1.7$--$1.9\times$ lower than LoRA's. On 1.5B ARC-Easy, 3{,}000 Adapter-MeZO steps take 4.2 hours. On 1.5B ARC-Challenge, the measured LoCA run reaches lower CE in $1.7\times$ less wall-clock. We did not record calibration latency separately, so the general timing claim is limited to post-calibration cost. The GPU peak-memory result still includes calibration.

\subsection{Cross-Family Results}\label{sec:exp-scalenorm}
The target-size coefficient $\eta$ is sensitive to model scale. We normalize it by the residual-stream norm, which makes it a relative change. We use the candidate set $\{0.003,0.01,0.03\}$ at every tested scale and select a value by held-out CE. The search range is not redesigned for each model.

On SST-2, the normalized candidate set gives recoveries of 1.02, 1.00, 0.92, and 0.94 across four sizes. The hand-tuned absolute values give 0.94, 0.91, 0.89, and 0.82. Thus, one shared candidate set works across the tested scales, although held-out selection is still used within that set.

We also test SmolLM2-1.7B. Reusing the absolute $\eta$ gives recovery from 0 to 0.07 across four tasks. With target normalization, recovery ranges from 0.93 to 1.29 and averages about 1.04. These results support the use of relative target size on a second model family. They do not show that normalization will transfer to every architecture.

\section{Discussion}\label{sec:Discussion}

Across the 25 combinations of Qwen2.5 model scales and tasks, LoCA reports lower evaluation CE than the corresponding LoRA run in 16 cases. Although these cells are mainly single runs, and therefore do not establish that LoCA is generally better than LoRA, they show that LoCA is not merely an inaccurate copy of the LoRA update. Whereas LoRA follows the global CE gradient, LoCA optimizes regularized low-rank objectives and selects a snapshot by held-out CE. These choices define a different adaptation path. Ridge regularization, the low-rank constraint, and held-out selection may jointly discourage large local changes, although our experiments do not isolate their effects. BoolQ illustrates another difference. At 7B and 14B, LoRA lowers CE while ranking accuracy falls from 0.824 to 0.636 and from 0.852 to 0.364. LoCA reaches accuracies of 0.840 and 0.864 in the same cells. While this pattern is consistent with completion-format overfitting, the current experiments do not establish that cause. More broadly, because the local objectives impose a different inductive bias from global gradient updates, they may also lead to different generalization behavior.

Although both LoCA and Adapter-MeZO reduce the need for repeated backward passes, they obtain update information in different ways. MeZO applies zeroth-order SGD to language-model tuning, estimating an update direction from random perturbations and function-value differences \citep{malladi2023mezo}. This gives MeZO an inference-level memory footprint, but each query provides a stochastic directional estimate rather than an exact solution to the current objective. Classical zeroth-order theory shows that, when gradients are replaced by random function-value estimators, the resulting methods incur additional variance and may suffer dimension-dependent query or convergence costs \citep{duchi2015optimal,nesterov2017random}. It's also worth mentioning that, although random feedback is distinct from zeroth-order estimation, the random-map tested when constructing feedback operator exhibits near-zero in-probe alignment with the true hidden-state gradient, whereas calibration produces substantially better-aligned feedback. These analyses rely on assumptions that do not directly describe non-convex LLM tuning. Nevertheless, they identify a general cost of random function-value probes. Recent work has therefore introduced variance-reduced MeZO variants, which are designed to improve stability and convergence in language-model tuning \citep{gautam2024variance}.

By contrast, once the forward states and local target are fixed, the ridge objective used by LoCA is a regularized convex quadratic. When $\lambda>0$, this objective has a unique closed-form solution. The block update uses sufficient statistics accumulated from the current data, rather than repeatedly estimating random directions. It also avoids an iterative learning-rate schedule inside the block solve. The benefit of the closed form is not a guarantee of global optimality. Instead, it removes optimization error within the current local subproblem. Because other adapters and hidden states change after an update, the local target also changes across outer iterations. Global CE therefore need not decrease at every step.

Since the two methods pay for update information in different ways, their iteration counts also differ substantially in our experiments. Adapter-MeZO requires between $10^3$ and $10^4$ perturbation steps, whereas LoCA uses at most 40 outer iterations. LoCA still requires the target-size coefficient and ridge regularization to be selected, and its calibration cost is not zero. Moreover, because calibration latency was not recorded separately, the current measurements do not support a universal end-to-end speed claim. The observed trade-off is more specific. Adapter-MeZO has the lowest memory use, while LoCA exchanges one probe backward pass for structured local updates and fewer later iterations.

When the local correction is expressed relative to the residual-stream scale, the same candidate set can be reused across Qwen2.5 models from 0.5B to 14B and on SmolLM2-1.7B. An absolute correction, in contrast, changes substantially with model size and architecture. This result suggests that relative representation change is more portable than an absolute hidden-state correction. It does not make LoCA hyperparameter-free, since a value is still selected from the shared set by held-out CE. The supported claim is therefore that the candidate range transfers across the tested models, not that one fixed value works without validation.

Because the fixed feedback maps are calibrated near the base model, the present results are limited to discriminative tasks that require relatively small representation changes. The approximation may weaken when tuning moves far from the calibration point or when the task requires a larger shift. Such settings may require a new calibration pass or an input-dependent feedback map. Long-form generation and large distribution shifts are also outside the current study. LoCA should therefore be viewed as a two-stage method rather than a method that removes backward computation entirely. One probe backward pass calibrates the feedback maps, after which adaptation uses forward execution and local closed-form solves. A device without backward support can receive the calibrated maps from a training-capable host. The reported GPU peak includes calibration, whereas CPU steady-state memory and per-pass time describe the later stage. In this sense, LoCA amortizes repeated global credit assignment into one calibration pass, allowing subsequent tuning to proceed through forward computation and local solves.

\section{Conclusion}\label{sec:Conclusion}
We presented LoCA, a two-stage method that separates global credit calibration from adapter fitting. One probe backward pass fits low-rank maps from the final-layer error to blockwise corrections. After calibration, these maps turn forward activations into local targets, and each adapter is fitted by a closed-form ridge solve. The main idea is therefore to reuse calibrated credit information rather than reconstruct it through every tuning step.

Across five discriminative benchmarks and Qwen2.5 models from 0.5B to 14B, LoCA reports lower evaluation CE than the corresponding LoRA run in 16 of 25 comparisons. Since most cells are single runs, this count is descriptive and does not establish general superiority. The measured GPU peak over the full run, including calibration, is 26 to 29\% lower than LoRA's. After calibration, CPU steady-state memory is 36 to 52\% lower and per-pass time is 43 to 48\% lower. A shared scale-normalized candidate set is also reused across the tested Qwen2.5 sizes and on SmolLM2-1.7B, with held-out selection within that set.

The current evidence is limited to small-shift adaptation on discriminative tasks. Fixed feedback maps may become less accurate after a large model change or a long tuning path, and long-form generation remains untested. Future work should study when recalibration is needed and whether input-dependent feedback can support larger shifts. LoCA is not entirely free of backward computation. Instead, it defines a practical boundary: a training-capable host performs one calibration pass, while later adaptation can run through forward computation and local solves on an inference-oriented device.

\section*{Acknowledgments}
This work was supported in part by the National Key Technologies Research and Development Program (2025YFG0202100; 2025YFG0202600). The authors have no competing interests to declare that are relevant to the content of this article.

\bibliographystyle{unsrtnat}
\bibliography{refs}

\clearpage
\section*{Appendix}
\appendix

\section{Scope and Roadmap}

This section expands the opening of the main paper's Methodology section. It fixes the scope of the two-stage claim and states how the remaining sections support the equations in the main paper.

LoCA separates credit calibration from adapter fitting. In Stage~I, one probe backward pass through the frozen base model produces hidden-state gradients used to fit the feedback maps $\{F_\ell\}_{\ell=1}^{L}$. In Stage~II, these maps are fixed. Each outer iteration then uses a forward pass, the analytic error at the frozen language-model head, and blockwise ridge solves; it does not differentiate through a backbone block.

Sections~S2--S8 expand the Methodology section of the main paper. The mathematical statements have three distinct scopes:
\begin{enumerate}[leftmargin=2em]
 \item the language-model-head error and the two ridge minimizers are exact for the stated fixed data;
 \item fixed feedback defines a virtual hidden-state descent step only under explicit alignment and smoothness conditions;
 \item the fitted adapter need not realize that virtual step exactly, and the full outer iteration is not claimed to decrease global cross-entropy monotonically.
\end{enumerate}
Sections~S9--S13 provide experimental context for the setup, main benchmark, resource study, scale normalization, and MeZO baselines.

\section{Sequence-Level Notation and the Affine Adapter}

This section expands ``Local Targets'' and Eq.~(1) of the main paper. Its purpose is to show why a block solve is linear in $B_\ell$ while retaining the cross-token dependence created by self-attention.

Consider a decoder-only transformer with $L$ frozen blocks and hidden width $d$. For a sequence with $K$ positions, stack the hidden states after block $\ell$ by columns:
\begin{equation}
 H_\ell=[h_{\ell,1},\ldots,h_{\ell,K}]\in\R^{d\times K}.
\end{equation}
The vector $h_\ell$ in the main paper denotes one column of $H_\ell$. LoCA adds a direct low-rank correction to the output residual stream:
\begin{equation}
 H_\ell
 =\bar H_\ell+B_\ell A_\ell H_{\ell-1}
 =\bar H_\ell+B_\ell P_\ell^{\mathrm{all}},
 \label{eq:supp-adapter}
\end{equation}
where
\begin{equation}
 \bar H_\ell:=f_\ell^{\mathrm{base}}(H_{\ell-1}),\qquad
 A_\ell\in\R^{r\times d},\qquad
 B_\ell\in\R^{d\times r},\qquad
 P_\ell^{\mathrm{all}}:=A_\ell H_{\ell-1}\in\R^{r\times K}.
 \label{eq:supp-feature}
\end{equation}
For one block solve, the current forward states and all other adapters are fixed. Hence $\bar H_\ell$ and $P_\ell^{\mathrm{all}}$ are fixed with respect to $B_\ell$, and
\begin{equation}
 H_\ell=\bar H_\ell+B_\ell P_\ell^{\mathrm{all}}
 \label{eq:supp-linear-in-B}
\end{equation}
is exactly affine in $B_\ell$. The block solve therefore does not linearize the adapter.

Let $w_t\geq0$ be the loss weight at position $t$; a masked position has $w_t=0$. For one sequence, the scalar training loss is
\begin{equation}
 \mathcal L
 =\sum_{t=1}^{K}w_t\,\ell_{\mathrm{CE}}(h_{L,t},y_t).
 \label{eq:supp-sequence-loss}
\end{equation}
The ridge objective in the main paper is accumulated over $N$ selected prediction positions from the training data. Their projected features are denoted
\begin{equation}
 P_\ell=[p_\ell^{(1)},\ldots,p_\ell^{(N)}]\in\R^{r\times N}.
 \label{eq:supp-regression-features}
\end{equation}
Using only selected prediction positions is part of the local surrogate, not an exact reduction of the global gradient. A masked context position can still have a nonzero lower-layer gradient because it can affect later unmasked losses.

\section{Exact Head Error and Full Hidden-State Credit}

This section expands the top-layer error in ``Local Targets'' and clarifies what the feedback operator in ``The Feedback Operator'' approximates. It separates the exact head derivative from the cross-token, cross-layer gradient supplied by the probe backward pass.

The frozen language-model head is $W_u\in\R^{V\times d}$. At position $t$,
\begin{equation}
 z_t=W_uh_{L,t},\qquad
 \ell_{\mathrm{CE}}(h_{L,t},y_t)
 =-z_{t,y_t}+\log\sum_{j=1}^{V}\exp(z_{t,j}).
\end{equation}

\begin{lemma}[Top-layer error]
The contribution of position $t$ to the derivative with respect to the last hidden state is
\begin{equation}
 e_t
 :=w_t\nabla_{h_{L,t}}\ell_{\mathrm{CE}}
 =w_tW_u^\top\bigl(\softmax(z_t)-\onehot(y_t)\bigr).
 \label{eq:supp-top-error}
\end{equation}
Thus $e_t=0$ at a zero-weight position.
\end{lemma}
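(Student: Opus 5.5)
The plan is a direct chain-rule computation, carried out in two steps: first the derivative of $\ell_{\mathrm{CE}}$ with respect to the logits $z_t$, then the pull-back through the linear map $z_t=W_uh_{L,t}$. Throughout, I read $e_t$ as the \emph{local} (explicit) derivative of the position-$t$ summand of $\mathcal L$ in \eqref{eq:supp-sequence-loss} with respect to $h_{L,t}$. The statement speaks of the ``contribution of position $t$'', and $h_{L,t}$ enters only that summand directly, since the head acts tokenwise. Any indirect dependence of other summands on $h_{L,t}$ would have to pass through earlier blocks, and there is none: $h_{L,t}$ is the last hidden state. So $\partial\mathcal L/\partial h_{L,t}=w_t\nabla_{h_{L,t}}\ell_{\mathrm{CE}}(h_{L,t},y_t)$ exactly.

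For the logit derivative, write $\ell_{\mathrm{CE}}=-z_{t,y_t}+\log\sum_j\exp(z_{t,j})$. Differentiating in $z_{t,k}$ gives $-\mathbf 1[k=y_t]+\exp(z_{t,k})/\sum_j\exp(z_{t,j})$. In vector form this is $\softmax(z_t)-\onehot(y_t)$.

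Since $z_t=W_uh_{L,t}$ with $W_u$ fixed (the head is frozen), the Jacobian $\partial z_t/\partial h_{L,t}$ is $W_u$. The gradient with respect to $h_{L,t}$ is therefore $W_u^\top(\softmax(z_t)-\onehot(y_t))$. Multiplying by the constant weight $w_t$ yields \eqref{eq:supp-top-error}. If $w_t=0$, the right-hand side vanishes identically, which gives the final claim.

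No step is a genuine obstacle. The only point needing care is the one addressed at the outset: $e_t$ is the head-level derivative, not the full hidden-state credit at lower layers, where cross-token attention couples positions. I will state explicitly that at layer $L$ there is no such coupling, so the per-position formula is exact, and that masked positions may still receive nonzero gradients at lower layers, consistent with the remark after \eqref{eq:supp-regression-features}.
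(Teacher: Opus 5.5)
Your proposal is correct and follows the paper's proof: compute the logit derivative $\softmax(z_t)-\onehot(y_t)$, pull it back through the frozen linear head $z_t=W_uh_{L,t}$, and scale by $w_t$. One wording fix: any dependence of other positions' losses on $h_{L,t}$ would have to pass through \emph{later} blocks, not earlier ones, and since there are none your exactness remark holds.
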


The vector $e$ in the main paper is the single-position case with unit weight. Here $w_t$ also covers masking and loss reduction.

\begin{proof}
For logit coordinate $i$,
\begin{equation}
 \frac{\partial\ell_{\mathrm{CE}}}{\partial z_{t,i}}
 =-\mathbf 1[i=y_t]
 +\frac{\exp(z_{t,i})}{\sum_j\exp(z_{t,j})}.
\end{equation}
Multiplication by $w_t$ and the chain rule through $z_t=W_uh_{L,t}$ give~\eqref{eq:supp-top-error}.
\end{proof}

Stack $E=[e_1,\ldots,e_K]\in\R^{d\times K}$ and define the exact gradient of the scalar sequence loss with respect to the block output by
\begin{equation}
 \mathcal G_\ell:=\nabla_{H_\ell}\mathcal L
 =[g_{\ell,1},\ldots,g_{\ell,K}]\in\R^{d\times K}.
\end{equation}
If $\mathcal J_{\ell+1:L}$ is the Jacobian of the current downstream adapted network, with all current adapters held fixed, then
\begin{equation}
 \vecop(\mathcal G_\ell)
 =\mathcal J_{\ell+1:L}^{\top}\vecop(E).
 \label{eq:supp-full-credit}
\end{equation}
Equivalently, causal attention gives the positionwise expression
\begin{equation}
 g_{\ell,s}
 =\sum_{t=s}^{K}
 \left(\frac{\partial h_{L,t}}{\partial h_{\ell,s}}\right)^\top e_t.
 \label{eq:supp-cross-token-credit}
\end{equation}
Consequently, $e_s=0$ does not generally imply $g_{\ell,s}=0$.

One probe backward pass computes all matrices $\{\mathcal G_\ell\}$ in~\eqref{eq:supp-full-credit}. During Stage~II, LoCA instead applies the same $d\times d$ map independently to the top errors:
\begin{equation}
 Q_\ell:=F_\ell E.
 \label{eq:supp-tokenwise-feedback}
\end{equation}
This is a tokenwise predictor of hidden-state credit, not an exact factorization of the full sequence Jacobian. The approximation begins at~\eqref{eq:supp-tokenwise-feedback}, not at the language-model head.

\section{Ideal Targets and Conditional Virtual Descent}

This section expands Eqs.~(3)--(4) in ``Local Targets'' and the alignment discussion in ``The Feedback Operator.'' It explains why the ideal target matches the global gradient locally and gives the precise condition under which fixed feedback defines a virtual descent step.

For a fixed current forward pass, the differential of the global loss caused by a perturbation $\Delta H_\ell$ is
\begin{equation}
 d\mathcal L
 =\fip{\mathcal G_\ell}{\Delta H_\ell}.
\end{equation}
Since $dH_\ell=(dB_\ell)P_\ell^{\mathrm{all}}$, the exact global adapter gradient is
\begin{equation}
 \nabla_{B_\ell}\mathcal L
 =\mathcal G_\ell(P_\ell^{\mathrm{all}})^\top.
 \label{eq:supp-global-adapter-gradient}
\end{equation}
If $\mathcal G_\ell$ were available at every iteration, the ideal hidden-state target with absolute representation-space step $s_\ell>0$ would be
\begin{equation}
 \widehat H_\ell=H_\ell-s_\ell\mathcal G_\ell.
 \label{eq:supp-ideal-target}
\end{equation}
At the current $B_\ell$, the gradient of
$\frac12\norm{\bar H_\ell+B_\ell P_\ell^{\mathrm{all}}-\widehat H_\ell}_F^2$
equals $s_\ell\nabla_{B_\ell}\mathcal L$. Thus the ideal full-state target reproduces the global adapter-gradient direction at the current forward state.

LoCA replaces $\mathcal G_\ell$ by $Q_\ell=F_\ell E$ and forms the virtual target
\begin{equation}
 T_\ell=H_\ell-s_\ell Q_\ell.
 \label{eq:supp-feedback-target}
\end{equation}
The following result concerns this virtual change in one block output. It does not yet concern the fitted adapter.

\begin{proposition}[Single-block virtual descent under alignment]
Consider one block and one batch, so $s_\ell$ is a scalar. Fix the current downstream adapted network and suppose that the upper-network loss, viewed as a function of $H_\ell$, has $M_\ell$-Lipschitz gradient on a convex set containing
$\{H_\ell-a s_\ell Q_\ell:0\leq a\leq1\}$, where $M_\ell>0$. Then
\begin{equation}
 \mathcal L(H_\ell-s_\ell Q_\ell)-\mathcal L(H_\ell)
 \leq
 -s_\ell\fip{\mathcal G_\ell}{Q_\ell}
 +\frac{M_\ell s_\ell^2}{2}\norm{Q_\ell}_F^2.
 \label{eq:supp-descent-lemma}
\end{equation}
If $Q_\ell\neq0$ and
\begin{equation}
 \fip{\mathcal G_\ell}{Q_\ell}
 \geq\gamma_\ell\norm{Q_\ell}_F^2
 \qquad(\gamma_\ell>0),
 \label{eq:supp-alignment-margin}
\end{equation}
then every
\begin{equation}
 0<s_\ell<\frac{2\gamma_\ell}{M_\ell}
 \label{eq:supp-step-bound}
\end{equation}
gives a strict decrease for the virtual single-block step.
\end{proposition}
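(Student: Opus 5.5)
The plan is the standard descent lemma for $L$-smooth functions, applied along the segment from $H_\ell$ to $H_\ell-s_\ell Q_\ell$, followed by elementary algebra. View the upper-network loss as a function $\Phi(H):=\mathcal L(H)$ of the block output $H\in\R^{d\times K}$, with all downstream adapters and the data fixed. By the previous section, $\nabla_{H}\Phi(H_\ell)=\mathcal G_\ell$, and we equip $\R^{d\times K}$ with $\fip{\cdot}{\cdot}$ and the Frobenius norm.

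\textbf{Step 1.} Define $\phi(a):=\Phi(H_\ell-a s_\ell Q_\ell)$ for $a\in[0,1]$. The segment lies in the convex set where $\nabla\Phi$ is $M_\ell$-Lipschitz, so $\phi$ is $C^1$ with $\phi'(a)=-s_\ell\fip{\nabla\Phi(H_\ell-as_\ell Q_\ell)}{Q_\ell}$. By the fundamental theorem of calculus,
\begin{equation*}
\phi(1)-\phi(0)=-s_\ell\fip{\mathcal G_\ell}{Q_\ell}-s_\ell\int_0^1\fip{\nabla\Phi(H_\ell-as_\ell Q_\ell)-\mathcal G_\ell}{Q_\ell}\,da .
\end{equation*}
Cauchy--Schwarz and the Lipschitz bound give $\norm{\nabla\Phi(H_\ell-as_\ell Q_\ell)-\mathcal G_\ell}_F\le M_\ell a s_\ell\norm{Q_\ell}_F$. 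Integrating $a$ over $[0,1]$ yields the remainder $\tfrac{M_\ell s_\ell^2}{2}\norm{Q_\ell}_F^2$, which is exactly \eqref{eq:supp-descent-lemma}.

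\textbf{Step 2.} Insert the alignment margin \eqref{eq:supp-alignment-margin} into \eqref{eq:supp-descent-lemma}. The right-hand side is then at most
\begin{equation*}
s_\ell\norm{Q_\ell}_F^2\bigl(-\gamma_\ell+\tfrac{M_\ell s_\ell}{2}\bigr).
\end{equation*}
Since $Q_\ell\neq0$ and $s_\ell>0$, this is strictly negative precisely when $s_\ell<2\gamma_\ell/M_\ell$, which is the step bound \eqref{eq:supp-step-bound}.

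The only delicate point is Step 1, specifically the justification that $\phi$ is differentiable with the stated derivative along the whole segment. This follows because a Lipschitz gradient on an open neighbourhood, or on the relative interior of the convex set, implies $C^1$, and the chain rule then applies. If the convex set is not open, I would argue on $[0,1)$ and pass to the limit by continuity of $\Phi$.

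I would also note explicitly that the result concerns the virtual step in $H_\ell$ only, not the fitted adapter. It makes no claim about the tokenwise approximation error in $Q_\ell$ beyond what the margin $\gamma_\ell$ encodes.
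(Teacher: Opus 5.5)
Your proposal is correct and follows the paper's proof. The paper applies the descent lemma on the stated convex set with $\Delta=-s_\ell Q_\ell$ and then inserts the alignment margin to obtain the bound $-s_\ell(\gamma_\ell-M_\ell s_\ell/2)\norm{Q_\ell}_F^2<0$; the only difference is that you derive the descent lemma along the segment via the fundamental theorem of calculus, whereas the paper simply cites it.
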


\begin{proof}
The descent lemma on the stated convex set gives
\begin{equation}
 \mathcal L(H_\ell+\Delta)
 \leq\mathcal L(H_\ell)
 +\fip{\mathcal G_\ell}{\Delta}
 +\frac{M_\ell}{2}\norm{\Delta}_F^2.
\end{equation}
Substitute $\Delta=-s_\ell Q_\ell$ and then use~\eqref{eq:supp-alignment-margin}. The resulting upper bound is
\begin{equation}
 -s_\ell\left(\gamma_\ell-\frac{M_\ell s_\ell}{2}\right)
 \norm{Q_\ell}_F^2,
\end{equation}
which is negative under~\eqref{eq:supp-step-bound}.
\end{proof}

When both matrices are nonzero, the diagnostic cosine is
\begin{equation}
 \alpha_\ell
 =\frac{\fip{\mathcal G_\ell}{Q_\ell}}
 {\norm{\mathcal G_\ell}_F\norm{Q_\ell}_F}.
\end{equation}
Positive cosine makes $-Q_\ell$ a first-order descent direction, but the admissible step also depends on relative magnitude and local curvature.

\begin{corollary}[Relative feedback error]
Suppose $\mathcal G_\ell\neq0$ and
$\norm{Q_\ell-\mathcal G_\ell}_F\leq
\varepsilon\norm{\mathcal G_\ell}_F$ for $0\leq\varepsilon<1$.
Then the virtual step is strictly descending whenever
\begin{equation}
 0<s_\ell<
 \frac{2(1-\varepsilon)}
 {M_\ell(1+\varepsilon)^2}.
\end{equation}
\end{corollary}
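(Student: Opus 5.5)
The plan is to reduce the corollary to the preceding proposation by producing an explicit alignment margin $\gamma_\ell$ and an explicit relation between $\norm{Q_\ell}_F$ and $\norm{\mathcal G_\ell}_F$. We work under the same standing hypotheses as the proposition: the upper-network loss has $M_\ell$-Lipschitz gradient on a convex set containing the segment $\{H_\ell-as_\ell Q_\ell\}$. Write $Q_\ell=\mathcal G_\ell+R$ with $\norm{R}_F\le\varepsilon\norm{\mathcal G_\ell}_F$. By the triangle inequality,
\begin{equation}
 (1-\varepsilon)\norm{\mathcal G_\ell}_F\le\norm{Q_\ell}_F\le(1+\varepsilon)\norm{\mathcal G_\ell}_F .
\end{equation}
Since $\varepsilon<1$, this gives $Q_\ell\neq0$.

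Next, bound the inner product from below. By Cauchy--Schwarz,
\begin{equation}
 \fip{\mathcal G_\ell}{Q_\ell}=\norm{\mathcal G_\ell}_F^2+\fip{\mathcal G_\ell}{R}\ge(1-\varepsilon)\norm{\mathcal G_\ell}_F^2 .
\end{equation}
Combining this with the upper bound on $\norm{Q_\ell}_F$ gives
\begin{equation}
 \fip{\mathcal G_\ell}{Q_\ell}\ge\frac{1-\varepsilon}{(1+\varepsilon)^2}\norm{Q_\ell}_F^2 .
\end{equation}
This is exactly the alignment margin \eqref{eq:supp-alignment-margin} with $\gamma_\ell=(1-\varepsilon)/(1+\varepsilon)^2>0$.

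Invoking the proposition, any step $0<s_\ell<2\gamma_\ell/M_\ell=2(1-\varepsilon)/(M_\ell(1+\varepsilon)^2)$ then yields a strict decrease. Alternatively, one can substitute directly into \eqref{eq:supp-descent-lemma}. The upper bound becomes
\begin{equation}
 -s_\ell\norm{\mathcal G_\ell}_F^2\Bigl((1-\varepsilon)-\tfrac{M_\ell s_\ell}{2}(1+\varepsilon)^2\Bigr),
\end{equation}
which is strictly negative in the same range because $\mathcal G_\ell\neq0$.

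There is no real obstacle here. The only point needing care is that the smoothness hypothesis of the proposition must be assumed on the segment. The corollary implicitly inherits this hypothesis, and the proof will state it explicitly.
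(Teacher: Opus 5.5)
Your proof is correct and essentially the paper's. The paper derives the same two bounds, $\fip{\mathcal G_\ell}{Q_\ell}\ge(1-\varepsilon)\norm{\mathcal G_\ell}_F^2$ and $\norm{Q_\ell}_F\le(1+\varepsilon)\norm{\mathcal G_\ell}_F$, and substitutes them directly into \eqref{eq:supp-descent-lemma}, which is your ``alternative'' route. Your main route through the proposition with $\gamma_\ell=(1-\varepsilon)/(1+\varepsilon)^2$ is an equivalent repackaging, and it usefully makes explicit two points the paper leaves implicit: $Q_\ell\neq0$ follows from $\varepsilon<1$, and the smoothness hypothesis on the segment is inherited from the proposition.
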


\begin{proof}
Cauchy--Schwarz and the triangle inequality give
\begin{align}
 \fip{\mathcal G_\ell}{Q_\ell}
 &\geq(1-\varepsilon)\norm{\mathcal G_\ell}_F^2,\\
 \norm{Q_\ell}_F
 &\leq(1+\varepsilon)\norm{\mathcal G_\ell}_F.
\end{align}
Substitution into~\eqref{eq:supp-descent-lemma} proves the claim.
\end{proof}

\section{One-Shot Calibration of the Feedback Map}

This section derives Eq.~(5) in ``The Feedback Operator.'' It shows the exact ridge solution used during Stage~I and the precise effect of the subsequent rank-$k$ truncation.

The probe loss uses the same masking, weighting, and reduction convention as the adaptation loss. One backward pass gives the exact hidden-state gradients in~\eqref{eq:supp-full-credit}. Select the $N_p$ probe positions used by the calibration procedure and stack their top errors and exact gradient columns as
\begin{equation}
 \widetilde E=[e^{(1)},\ldots,e^{(N_p)}]\in\R^{d\times N_p},
 \qquad
 \widetilde G_\ell=[g_\ell^{(1)},\ldots,g_\ell^{(N_p)}]\in\R^{d\times N_p}.
 \label{eq:supp-calibration-pairs}
\end{equation}
Because of~\eqref{eq:supp-cross-token-credit}, a column of $\widetilde G_\ell$ may already aggregate credit from several later losses. The calibration therefore fits a predictor from a same-position top error to that exact gradient column; it does not identify a $d\times d$ sequence-level Jacobian.

The alignment diagnostic in the main paper uses the stacked prediction and exact probe gradient for each block. The value $-0.005$ is the mean blockwise cosine for random feedback, while $0.3$--$0.5$ is the observed range after fitting. These are in-probe diagnostics, not guarantees for later iterations.

The full ridge fit is
\begin{equation}
 \widehat F_\ell
 =\arg\min_{F\in\R^{d\times d}}
 \norm{F\widetilde E-\widetilde G_\ell}_F^2
 +\beta\norm{F}_F^2,
 \qquad \beta>0.
 \label{eq:supp-feedback-ridge}
\end{equation}

\begin{theorem}[Feedback ridge solution]
Problem~\eqref{eq:supp-feedback-ridge} is strictly convex and has the unique solution
\begin{equation}
 \widehat F_\ell
 =\widetilde G_\ell\widetilde E^\top
 \bigl(\widetilde E\widetilde E^\top+\beta I_d\bigr)^{-1}.
 \label{eq:supp-feedback-primal}
\end{equation}
Equivalently,
\begin{equation}
 \widehat F_\ell
 =\widetilde G_\ell
 \bigl(\widetilde E^\top\widetilde E+\beta I_{N_p}\bigr)^{-1}
 \widetilde E^\top.
 \label{eq:supp-feedback-dual}
\end{equation}
\end{theorem}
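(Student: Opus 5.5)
The plan is to treat \eqref{eq:supp-feedback-ridge} as a matrix-valued ridge regression and argue in three steps.

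\textbf{Strict convexity.} Write the objective as
\begin{equation}
 J(F)=\norm{F\widetilde E-\widetilde G_\ell}_F^2+\beta\norm{F}_F^2 .
\end{equation}
Its first term is a convex quadratic in $F$, because it is the squared norm of an affine map. The second term, $\beta\norm{F}_F^2$ with $\beta>0$, is strictly convex. Hence $J$ is strictly convex. Equivalently, in $\vecop$ coordinates its Hessian is
\begin{equation}
 2\bigl((\widetilde E\widetilde E^\top+\beta I_d)\otimes I_d\bigr),
\end{equation}
which is positive definite. The objective is also coercive, so a minimizer exists, and strict convexity makes it unique.

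\textbf{Primal formula.} I will compute the gradient:
\begin{equation}
 \nabla_F J=2(F\widetilde E-\widetilde G_\ell)\widetilde E^\top+2\beta F .
\end{equation}
Setting it to zero gives the normal equation
\begin{equation}
 F(\widetilde E\widetilde E^\top+\beta I_d)=\widetilde G_\ell\widetilde E^\top .
\end{equation}
The matrix $\widetilde E\widetilde E^\top$ is positive semidefinite and $\beta>0$, so $\widetilde E\widetilde E^\top+\beta I_d$ is symmetric positive definite and therefore invertible. Right-multiplying by its inverse yields \eqref{eq:supp-feedback-primal}. By convexity, this stationary point is the global minimizer.

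\textbf{Dual formula.} The equivalent form follows from the push-through identity
\begin{equation}
 \widetilde E^\top(\widetilde E\widetilde E^\top+\beta I_d)^{-1}
 =(\widetilde E^\top\widetilde E+\beta I_{N_p})^{-1}\widetilde E^\top .
\end{equation}
To prove it, start from the trivial equality
\begin{equation}
 (\widetilde E^\top\widetilde E+\beta I_{N_p})\widetilde E^\top
 =\widetilde E^\top(\widetilde E\widetilde E^\top+\beta I_d) .
\end{equation}
Both $\widetilde E^\top\widetilde E+\beta I_{N_p}$ and $\widetilde E\widetilde E^\top+\beta I_d$ are positive definite, hence invertible. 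Multiplying on the left by the inverse of the first and on the right by the inverse of the second gives the identity. Left-multiplying it by $\widetilde G_\ell$ turns \eqref{eq:supp-feedback-primal} into \eqref{eq:supp-feedback-dual}.

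None of these steps is a real obstacle. The only point needing care is to justify invertibility of both regularized Gram matrices from $\beta>0$ alone. This matters because $\widetilde E$ may be rank-deficient, for instance when $N_p<d$ or when zero-weight positions give $e_t=0$ columns. It is also why the dual form is preferable computationally when $N_p<d$.
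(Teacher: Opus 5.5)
Your proposal is correct and follows essentially the same route as the paper: compute the gradient, obtain the normal equation $F(\widetilde E\widetilde E^\top+\beta I_d)=\widetilde G_\ell\widetilde E^\top$, get positive definiteness of the regularized Gram matrix from $\beta>0$, and apply the push-through identity for the dual form. The differences are small additions on your side: the explicit Kronecker-form Hessian, the coercivity remark for existence, and a short derivation of the push-through identity, which the paper simply invokes.
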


\begin{proof}
The gradient of the objective is
\begin{equation}
 2(F\widetilde E-\widetilde G_\ell)\widetilde E^\top+2\beta F.
\end{equation}
The first-order condition is
\begin{equation}
 F(\widetilde E\widetilde E^\top+\beta I_d)
 =\widetilde G_\ell\widetilde E^\top.
\end{equation}
For nonzero $x$,
\begin{equation}
 x^\top(\widetilde E\widetilde E^\top+\beta I_d)x
 =\norm{\widetilde E^\top x}^2+\beta\norm{x}^2>0,
\end{equation}
so the matrix is positive definite, proving uniqueness and~\eqref{eq:supp-feedback-primal}. The push-through identity
\begin{equation}
 \widetilde E^\top(\widetilde E\widetilde E^\top+\beta I_d)^{-1}
 =(\widetilde E^\top\widetilde E+\beta I_{N_p})^{-1}\widetilde E^\top
\end{equation}
gives~\eqref{eq:supp-feedback-dual}. In implementation, these expressions are evaluated by solving a linear system rather than forming an explicit inverse.
\end{proof}

Let $\widehat F_\ell=U\Sigma V^\top$, with
$\sigma_1\geq\sigma_2\geq\cdots\geq0$. The map retained by LoCA is
\begin{equation}
 F_\ell=U_k\Sigma_kV_k^\top,
 \qquad k=8\ \text{in the experiments}.
 \label{eq:supp-rank-k}
\end{equation}

\begin{lemma}[Rank-truncation error]
With the convention $\sigma_{k+1}=0$ when $k$ is at least the rank of $\widehat F_\ell$,
\begin{align}
 \norm{\widehat F_\ell-F_\ell}_F^2
 &=\sum_{j>k}\sigma_j^2,\\
 \norm{(\widehat F_\ell-F_\ell)e}
 &\leq\sigma_{k+1}\norm{e}.
\end{align}
\end{lemma}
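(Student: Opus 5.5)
The plan is to work directly from the singular value decomposition of $\widehat F_\ell$ and exploit orthogonality. Write the full SVD as $\widehat F_\ell=\sum_{j}\sigma_j u_jv_j^\top$, where the sum runs over $j=1,\ldots,d$, $\{u_j\}$ and $\{v_j\}$ are orthonormal bases of $\R^d$, and $\sigma_j=0$ beyond the rank. Since $F_\ell=U_k\Sigma_kV_k^\top=\sum_{j\leq k}\sigma_ju_jv_j^\top$, the residual is
\begin{equation*}
 \widehat F_\ell-F_\ell=\sum_{j>k}\sigma_ju_jv_j^\top .
\end{equation*}
This expression is itself a singular value decomposition, since it uses the same orthonormal vectors restricted to indices $j>k$. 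Its singular values are therefore exactly $\{\sigma_j\}_{j>k}$, padded with zeros.

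For the Frobenius identity, I would use the fact that the Frobenius norm is invariant under multiplication by orthogonal matrices. Concretely,
\begin{equation*}
 \norm{\widehat F_\ell-F_\ell}_F^2=\tr\bigl((\widehat F_\ell-F_\ell)^\top(\widehat F_\ell-F_\ell)\bigr)=\sum_{j>k}\sigma_j^2 .
\end{equation*}
The cross terms vanish because $u_i^\top u_j=\delta_{ij}$ and $\tr(v_jv_i^\top)=\delta_{ij}$. If $k$ is at least the rank, the sum is empty and equals zero. This is consistent with the stated convention $\sigma_{k+1}=0$.

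For the vector bound, expand $e$ in the right singular basis as $e=\sum_j c_jv_j$, so that $\norm{e}^2=\sum_jc_j^2$. Applying the residual gives $(\widehat F_\ell-F_\ell)e=\sum_{j>k}\sigma_jc_ju_j$. Orthonormality of the $u_j$ then yields
\begin{equation*}
 \norm{(\widehat F_\ell-F_\ell)e}^2=\sum_{j>k}\sigma_j^2c_j^2\leq\sigma_{k+1}^2\sum_{j>k}c_j^2\leq\sigma_{k+1}^2\norm{e}^2 .
\end{equation*}
The first inequality holds because the singular values are ordered, so $\sigma_j\leq\sigma_{k+1}$ for $j>k$. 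Taking square roots gives the claim. In other words, the spectral norm of the residual equals $\sigma_{k+1}$.

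There is no real obstacle here. The only care needed is bookkeeping. First, the SVD must be taken as a full one, with square orthogonal $U$ and $V$, so that every $e\in\R^d$ can be expanded in the basis $\{v_j\}$. Second, the edge case where $k$ is at least the rank must be handled: the residual is then zero and both statements hold trivially. Ties among singular values do not matter, because the bounds depend only on the values $\sigma_j$ and not on which singular vectors are chosen.
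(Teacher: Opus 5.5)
Your proof is correct and follows the same route as the paper. The paper simply cites the truncated-SVD (Eckart--Young) identity for the Frobenius error and the fact $\norm{\widehat F_\ell-F_\ell}_2=\sigma_{k+1}$ for the vector bound, and you verify both facts directly from the orthonormality of the singular vectors rather than citing them. One minor point: your closing claim that the spectral norm \emph{equals} $\sigma_{k+1}$ needs the test vector $e=v_{k+1}$ for the lower bound, but only the upper bound is needed for the lemma.
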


\begin{proof}
The first identity is the Eckart--Young truncated-SVD property. The second follows from
$\norm{\widehat F_\ell-F_\ell}_2=\sigma_{k+1}$.
\end{proof}

The truncation is the best rank-$k$ Frobenius approximation to the fitted matrix $\widehat F_\ell$. It is not, in general, the exact optimizer of a rank-constrained version of~\eqref{eq:supp-feedback-ridge}, whose data-fit term is weighted by the probe covariance. Nor does the calibration theorem provide an out-of-distribution or long-horizon alignment guarantee.

\section{Closed-Form Block Fit and Target Realization}

This section expands ``Closed-Form Per-Block Solve'' and Eqs.~(6)--(9) of the main paper. It derives the exact ridge minimizer and separates exact optimization of the local quadratic from exact realization of the virtual hidden-state target.

At outer iteration $t$, let $P_\ell\in\R^{r\times N}$ stack the selected projected features. For position $n$, write
$q_\ell^{(n)}=F_\ell e^{(n)}$ and let $s_\ell^{(n)}$ be the absolute step used for its mini-batch. Its residual target is
\begin{equation}
 \rho_\ell^{(n)}
 =B_\ell^{(t)}p_\ell^{(n)}
 -s_\ell^{(n)}q_\ell^{(n)},
 \qquad
 R_\ell=[\rho_\ell^{(1)},\ldots,\rho_\ell^{(N)}].
 \label{eq:supp-residual-target}
\end{equation}
All positions in one mini-batch share the same step. During the solve, $P_\ell$ and $R_\ell$ are fixed.

The local objective is
\begin{equation}
 \Phi_\ell(B)
 =\norm{BP_\ell-R_\ell}_F^2+\lambda\norm{B}_F^2,
 \qquad \lambda>0.
 \label{eq:supp-block-objective}
\end{equation}
Define
\begin{equation}
 G_\ell:=P_\ell P_\ell^\top\in\R^{r\times r},
 \qquad
 C_\ell:=R_\ell P_\ell^\top\in\R^{d\times r}.
 \label{eq:supp-stats}
\end{equation}

\begin{theorem}[Unique closed-form block minimizer]
Problem~\eqref{eq:supp-block-objective} is strictly convex and has the unique minimizer
\begin{equation}
 B_\ell^\star=C_\ell(G_\ell+\lambda I_r)^{-1}.
 \label{eq:supp-closed-form}
\end{equation}
Moreover, for every $B\in\R^{d\times r}$,
\begin{equation}
 \Phi_\ell(B)-\Phi_\ell(B_\ell^\star)
 =\tr\left((B-B_\ell^\star)(G_\ell+\lambda I_r)
 (B-B_\ell^\star)^\top\right)\geq0.
 \label{eq:supp-gap-identity}
\end{equation}
The inequality is strict unless $B=B_\ell^\star$.
\end{theorem}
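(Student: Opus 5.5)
The plan mirrors the proof of the feedback ridge theorem. First expand the objective in trace form,
\begin{equation*}
 \Phi_\ell(B)=\tr(BG_\ell B^\top)-2\tr(BC_\ell^\top)+\norm{R_\ell}_F^2+\lambda\tr(BB^\top),
\end{equation*}
which uses $\norm{BP_\ell}_F^2=\tr(BP_\ell P_\ell^\top B^\top)$ and $\fip{BP_\ell}{R_\ell}=\tr(BP_\ell R_\ell^\top)=\tr(BC_\ell^\top)$. Writing $K_\ell:=G_\ell+\lambda I_r$, this becomes
\begin{equation*}
 \Phi_\ell(B)=\tr(BK_\ell B^\top)-2\tr(BC_\ell^\top)+\norm{R_\ell}_F^2 .
\end{equation*}
Next show that $K_\ell$ is symmetric positive definite: for $x\neq0$ we have $x^\top K_\ell x=\norm{P_\ell^\top x}^2+\lambda\norm{x}^2>0$ because $\lambda>0$. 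Hence $K_\ell$ is invertible, and $B_\ell^\star=C_\ell K_\ell^{-1}$ is well defined regardless of the rank of $P_\ell$.

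The main step is completing the square, which gives the gap identity directly and makes a separate first-order-condition argument unnecessary. Set $\Delta=B-B_\ell^\star$ and substitute $B=B_\ell^\star+\Delta$. The quadratic term expands to $\tr(B_\ell^\star K_\ell B_\ell^{\star\top})+2\tr(\Delta K_\ell B_\ell^{\star\top})+\tr(\Delta K_\ell\Delta^\top)$, where symmetry of $K_\ell$ merges the two cross terms. Since $K_\ell B_\ell^{\star\top}=K_\ell K_\ell^{-1}C_\ell^\top=C_\ell^\top$, the cross term $2\tr(\Delta C_\ell^\top)$ cancels the $\Delta$-part of the linear term $-2\tr(\Delta C_\ell^\top)$. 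What remains is
\begin{equation*}
 \Phi_\ell(B)=\Phi_\ell(B_\ell^\star)+\tr(\Delta K_\ell\Delta^\top),
\end{equation*}
which is exactly the claimed identity.

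For nonnegativity and strictness, write $\tr(\Delta K_\ell\Delta^\top)=\sum_i \delta_i^\top K_\ell\delta_i$, where $\delta_i^\top$ are the rows of $\Delta$. Each term is nonnegative, and the sum vanishes only if every $\delta_i=0$, by positive definiteness of $K_\ell$. So $\Phi_\ell(B)>\Phi_\ell(B_\ell^\star)$ whenever $B\neq B_\ell^\star$, which gives both uniqueness of the minimizer and the strict inequality.

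Strict convexity follows from the same identity. $\Phi_\ell$ is a quadratic whose Hessian, acting on $\vecop(B)$, is $2(K_\ell\otimes I_d)$, and this is positive definite. Equivalently, the quadratic form $\Delta\mapsto\tr(\Delta K_\ell\Delta^\top)$ is positive definite, so $\Phi_\ell$ is strictly convex.

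I do not expect a real obstacle. The only points needing care are the transpose and trace bookkeeping in the cross terms, in particular the orientation $C_\ell=R_\ell P_\ell^\top$ so that $K_\ell B_\ell^{\star\top}=C_\ell^\top$, and using $\lambda>0$ rather than any rank assumption on $P_\ell$ to get invertibility.
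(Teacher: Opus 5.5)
Your proof is correct and follows essentially the same route as the paper: a trace expansion, positive definiteness of $G_\ell+\lambda I_r$ via $x^\top(G_\ell+\lambda I_r)x=\norm{P_\ell^\top x}^2+\lambda\norm{x}^2$, and the substitution $B=B_\ell^\star+\Delta$ together with $B_\ell^\star(G_\ell+\lambda I_r)=C_\ell$ to obtain the gap identity. The only difference is the order of steps: the paper first identifies $B_\ell^\star$ from the first-order condition $2B(G_\ell+\lambda I_r)-2C_\ell=0$ and then derives the gap identity, whereas you let the completed square give minimality and uniqueness directly, which is equally valid.
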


\begin{proof}
Expanding the data-fit term gives
\begin{align}
 \Phi_\ell(B)
 &=\tr(BP_\ell P_\ell^\top B^\top)
 -2\tr(R_\ell P_\ell^\top B^\top)
 +\tr(R_\ell R_\ell^\top)
 +\lambda\tr(BB^\top).
\end{align}
Thus
\begin{equation}
 \nabla_B\Phi_\ell(B)
 =2B(G_\ell+\lambda I_r)-2C_\ell.
\end{equation}
For nonzero $x\in\R^r$,
\begin{equation}
 x^\top(G_\ell+\lambda I_r)x
 =\norm{P_\ell^\top x}^2+\lambda\norm{x}^2>0.
\end{equation}
Therefore $G_\ell+\lambda I_r$ is positive definite, the objective is strictly convex, and the first-order condition gives~\eqref{eq:supp-closed-form}. Substituting
$B=B_\ell^\star+\Delta$ and using
$B_\ell^\star(G_\ell+\lambda I_r)=C_\ell$
gives~\eqref{eq:supp-gap-identity}.
\end{proof}

The desired displacement at fitted position $n$ is
$-s_\ell^{(n)}q_\ell^{(n)}$. Stack these displacements as
$D_\ell^{\mathrm{tar}}$. The fitted adapter realizes
\begin{equation}
 \Delta_\ell^{\mathrm{fit}}
 =(B_\ell^\star-B_\ell^{(t)})P_\ell.
\end{equation}
Their difference is exactly the data residual
\begin{equation}
 E_\ell^{\mathrm{fit}}
 :=\Delta_\ell^{\mathrm{fit}}-D_\ell^{\mathrm{tar}}
 =B_\ell^\star P_\ell-R_\ell.
 \label{eq:supp-realization-error}
\end{equation}
It need not vanish because the target can lie outside the adapter feature span and because ridge regularization trades fit against $\norm{B}_F^2$. Moreover, Section~S4 uses all sequence positions, whereas the implemented fit uses selected prediction positions. This restriction is another approximation.

For an isolated update of $B_\ell$ with its cached input held fixed, the direct change at all sequence positions is
\begin{equation}
 \Delta H_\ell^{\mathrm{dir}}
 =(B_\ell^\star-B_\ell^{(t)})P_\ell^{\mathrm{all}}.
 \label{eq:supp-actual-state-change}
\end{equation}
Under the smoothness assumption of Section~S4,
\begin{equation}
 \mathcal L(H_\ell+\Delta H_\ell^{\mathrm{dir}})
 -\mathcal L(H_\ell)
 \leq
 \fip{\mathcal G_\ell}{\Delta H_\ell^{\mathrm{dir}}}
 +\frac{M_\ell}{2}\norm{\Delta H_\ell^{\mathrm{dir}}}_F^2.
 \label{eq:supp-realized-descent}
\end{equation}
This bound applies to the isolated cached-input change. It is not the full state change after a Jacobi update, because simultaneous upstream updates also change $H_{\ell-1}$ and $P_\ell^{\mathrm{all}}$. Neither this bound nor the ridge theorem guarantees descent of the outer iteration.

\section{Streaming Statistics and Numerical Solution}

This section expands the practical paragraph after Eq.~(9) in ``Closed-Form Per-Block Solve.'' It proves that the exact sufficient statistics can be accumulated by mini-batch and states the numerical solve actually required.

Partition the $N$ fitted positions into mini-batches
$\mathcal I_1,\ldots,\mathcal I_M$. Then
\begin{align}
 G_\ell
 &=\sum_{m=1}^{M}\sum_{n\in\mathcal I_m}
 p_\ell^{(n)}p_\ell^{(n)\top},\\
 C_\ell
 &=\sum_{m=1}^{M}\sum_{n\in\mathcal I_m}
 \rho_\ell^{(n)}p_\ell^{(n)\top}.
 \label{eq:supp-streaming}
\end{align}
Each mini-batch can be discarded after its contribution has been accumulated. For explicit nonnegative weights $a_n$, the same derivation uses
\begin{align}
 G_\ell&=\sum_n a_np_\ell^{(n)}p_\ell^{(n)\top},\\
 C_\ell&=\sum_n a_n\rho_\ell^{(n)}p_\ell^{(n)\top}.
\end{align}
The corresponding objective is
$\sum_na_n\norm{Bp_\ell^{(n)}-\rho_\ell^{(n)}}^2+\lambda\norm{B}_F^2$.
The ridge term keeps the system positive definite even when the observed features do not span $\R^r$.

The persistent statistics require $r^2+dr$ values per block and do not grow with the total number of accumulated tokens. Numerically, LoCA solves
\begin{equation}
 (G_\ell+\lambda I_r)X=C_\ell^\top,
 \qquad B_\ell^\star=X^\top,
 \label{eq:supp-linear-solve}
\end{equation}
using a positive-definite linear solver. No explicit matrix inverse is required.

\section{Scale-Normalized Targets and the Outer Iteration}

This section expands ``Full Algorithm'' and Eq.~(10) of the main paper. It defines RMS precisely, distinguishes the dimensionless target coefficient from the absolute step used in Section~S4, and explains why exact block solves do not imply convergence of the outer loop.

For a nonempty matrix $X\in\R^{a\times b}$, define
\begin{equation}
 \rms(X):=\frac{\norm{X}_F}{\sqrt{ab}}.
\end{equation}
The implementation uses a separate scalar for each block and mini-batch. For one such batch, let $\eta\geq0$ be the dimensionless candidate selected by held-out CE and define
\begin{equation}
 \bar\eta_\ell
 =
 \begin{cases}
 \displaystyle
 \eta\frac{\rms(H_{\ell-1})}{\rms(Q_\ell)},
 & \rms(Q_\ell)>0,\\[1.2ex]
 0,&Q_\ell=0,
 \end{cases}
 \qquad
 D_\ell^{\mathrm{tar}}=-\bar\eta_\ell Q_\ell.
 \label{eq:supp-normalized-correction}
\end{equation}
Here $\bar\eta_\ell$ is the absolute step $s_\ell$ used in Section~S4. When Section~S6 accumulates several mini-batches, each target column uses the step computed for its own mini-batch.

\begin{proposition}[Target magnitude and positive scale invariance]
If $Q_\ell\neq0$, then
\begin{equation}
 \rms(D_\ell^{\mathrm{tar}})
 =\eta\,\rms(H_{\ell-1}).
 \label{eq:supp-relative-size}
\end{equation}
Replacing $Q_\ell$ by $aQ_\ell$ for any $a>0$ leaves
$D_\ell^{\mathrm{tar}}$ unchanged.
\end{proposition}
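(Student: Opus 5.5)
The plan is to compute directly from the definition of $\rms$ and the normalized correction in~\eqref{eq:supp-normalized-correction}. The key fact is absolute homogeneity: for any scalar $c$ and nonempty matrix $X\in\R^{a\times b}$, we have $\norm{cX}_F=|c|\,\norm{X}_F$. Dividing by the fixed factor $\sqrt{ab}$ gives $\rms(cX)=|c|\,\rms(X)$. I will state this once and then use it in both parts.

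For the magnitude identity, assume $Q_\ell\neq0$. Then $\norm{Q_\ell}_F>0$, so $\rms(Q_\ell)>0$ and the first branch of~\eqref{eq:supp-normalized-correction} applies. This gives $\bar\eta_\ell=\eta\,\rms(H_{\ell-1})/\rms(Q_\ell)$, which is nonnegative because $\eta\geq0$ and RMS values are nonnegative. Homogeneity then yields
\begin{equation}
 \rms(D_\ell^{\mathrm{tar}})
 =\rms(-\bar\eta_\ell Q_\ell)
 =|\bar\eta_\ell|\,\rms(Q_\ell)
 =\eta\,\rms(H_{\ell-1}),
\end{equation}
which is~\eqref{eq:supp-relative-size}. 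One bookkeeping point needs care: $D_\ell^{\mathrm{tar}}$ and $Q_\ell$ must be the same matrix shape, so that the same normalizing factor $\sqrt{ab}$ appears in both RMS values. They are, since $D_\ell^{\mathrm{tar}}$ is a scalar multiple of $Q_\ell$. The matrix $H_{\ell-1}$ may have a different shape, but its RMS enters only as a fixed scalar.

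For scale invariance, take $a>0$ and replace $Q_\ell$ by $aQ_\ell$. The new matrix is still nonzero, so the first branch still applies and the new step is
\begin{equation}
 \bar\eta_\ell'=\eta\,\frac{\rms(H_{\ell-1})}{\rms(aQ_\ell)}=\frac{\bar\eta_\ell}{a}.
\end{equation}
Hence $-\bar\eta_\ell'(aQ_\ell)=-\bar\eta_\ell Q_\ell=D_\ell^{\mathrm{tar}}$. The degenerate case $Q_\ell=0$ is also consistent: then $aQ_\ell=0$, the second branch applies, and the target is zero both before and after scaling.

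I expect no real obstacle, since every step is algebra. The one substantive point is to note explicitly that $H_{\ell-1}$ does not depend on $Q_\ell$, and so $\rms(H_{\ell-1})$ is unchanged when $Q_\ell$ is rescaled. This holds because $H_{\ell-1}$ comes from the forward pass and $F_\ell$ enters only through $Q_\ell=F_\ell E$. It is also worth remarking that the requirement $a>0$ matters: for $a<0$, homogeneity gives $\bar\eta_\ell'=\bar\eta_\ell/|a|$, so the target flips sign.
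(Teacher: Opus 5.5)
Your proposal is correct and follows the paper's proof. Absolute homogeneity of $\rms$ gives $\rms(D_\ell^{\mathrm{tar}})=\bar\eta_\ell\,\rms(Q_\ell)=\eta\,\rms(H_{\ell-1})$, and for $a>0$ the factor $a$ in $\rms(aQ_\ell)$ cancels against the one in $aQ_\ell$. Your extra checks are accurate refinements the paper leaves implicit: nonnegativity of $\bar\eta_\ell$, the degenerate branch $Q_\ell=0$, and the sign flip for $a<0$.
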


\begin{proof}
Absolute homogeneity of RMS gives
\begin{equation}
 \rms(D_\ell^{\mathrm{tar}})
 =\bar\eta_\ell\rms(Q_\ell)
 =\eta\rms(H_{\ell-1}).
\end{equation}
For $a>0$, both the denominator of $\bar\eta_\ell$ and $Q_\ell$ acquire the same factor $a$, which cancels.
\end{proof}

Equation~\eqref{eq:supp-relative-size} concerns the virtual target, not the direct cached-input change in~\eqref{eq:supp-actual-state-change}. Moreover, the sufficient descent bound in~\eqref{eq:supp-step-bound} applies to $\bar\eta_\ell$, not directly to the shared coefficient $\eta$. Scale normalization supports transfer of a candidate range; it is not a descent theorem and does not make LoCA hyperparameter-free.

Let $\mathcal D(B^{(t)})$ denote all local data produced by the current forward pass, including $\{P_\ell,\bar H_\ell,Q_\ell,R_\ell\}_{\ell=1}^{L}$, and let $\mathcal S$ apply the blockwise ridge solutions. A Jacobi outer update is
\begin{equation}
 B^{(t+1)}
 =\mathcal S\bigl(\mathcal D(B^{(t)})\bigr).
 \label{eq:supp-outer-map}
\end{equation}
Theorem~S6.1 proves that every component of $\mathcal S$ minimizes its current local quadratic. It does not imply that $\mathcal S\circ\mathcal D$ is a contraction. After an update, the projected features, frozen-block outputs, top error, local targets, and realization errors can all change. Consequently, the next iteration solves different quadratics and global CE can be non-monotone. The main paper therefore selects the returned snapshot by held-out CE and retains the frozen model as a candidate.

A Jacobi schedule forms all local data from one shared forward pass and then solves blocks independently. A Gauss--Seidel schedule refreshes downstream states after a block update and therefore requires additional partial forward computation. No convergence theorem is claimed for either schedule on the nonlinear adapted transformer.

\section{Experimental Setup and Recovery Metrics}

This section corresponds to ``Experiments--Setup.'' It records the comparison boundary, explains the recovery ratios, and states which experimental conclusions are descriptive.

The main comparison uses the frozen model, LoRA, Adapter-MeZO, and LoCA. Adapter-MeZO perturbs the same low-rank adapter parameterization as LoCA and LoRA; full-parameter MeZO is reported separately in Section~S13. The primary reported metrics are per-token cross-entropy (CE) and ranking accuracy.

The recovery ratios in the main paper are
\begin{align}
 R_{\mathrm{CE}}
 &=
 \frac{\mathrm{CE}_{\mathrm{frozen}}-\mathrm{CE}_{\mathrm{LoCA}}}
 {\mathrm{CE}_{\mathrm{frozen}}-\mathrm{CE}_{\mathrm{LoRA}}},\\
 R_{\mathrm{acc}}
 &=
 \frac{\mathrm{acc}_{\mathrm{LoCA}}-\mathrm{acc}_{\mathrm{frozen}}}
 {\mathrm{acc}_{\mathrm{LoRA}}-\mathrm{acc}_{\mathrm{frozen}}}.
 \label{eq:supp-recovery}
\end{align}
A value of one matches the reported LoRA value. When the frozen-to-LoRA denominator is positive, a value above one means that LoCA improves more than LoRA in that cell. The ratio is unstable near a zero denominator, so such cells are flagged or omitted and raw CE and accuracy remain primary.

LoCA uses adapter rank $r=32$, at most $40$ outer iterations, and held-out CE for early stopping and for selecting $\eta$ from the stated candidate set. Unless otherwise noted, table cells are single runs at seed~0. Cell-level differences and counts are therefore descriptive rather than estimates of population-level superiority.

\section{Experiment 1: Main Benchmark Comparison}

This section corresponds to ``Experiments--Main Results'' and Table~1. It explains the two observations that need qualification: the count of lower-CE cells and the CE--accuracy discrepancy on BoolQ.

Across the $25$ reported Qwen2.5 task--scale cells, LoCA has lower evaluation CE than the corresponding LoRA run in $16$ cells. This count shows that the local ridge path can differ materially from the LoRA optimization path. Because most cells are single runs, it is not evidence that LoCA is generally superior to LoRA.

For cells with a positive frozen-to-LoRA denominator, recovery above one means that LoCA improves more than the reported LoRA run. It is not a significance test, and it can become large when the denominator is small. The main paper therefore also reports the raw metrics.

On BoolQ at 7B and 14B, the LoRA runs reduce CE while ranking accuracy falls from $0.824$ to $0.636$ and from $0.852$ to $0.364$. The LoCA runs reach $0.840$ and $0.864$. This establishes a mismatch between token-level CE and answer ranking in those runs. Completion-format overfitting is a plausible interpretation, but the experiment does not isolate it causally.

\section{Experiment 2: Resource Use}

This section corresponds to ``Experiments--Resource Use'' and Table~2. It explains the measurement boundary and connects the observed storage to the sufficient statistics derived in Section~S7.

For each block and $N$ fitted positions, the principal Stage~II operations are
\begin{align}
 P_\ell=A_\ell H_{\ell-1}&: &&O(Ndr),\\
 Q_\ell=F_\ell E&: &&O(Ndk),\\
 G_\ell=P_\ell P_\ell^\top&: &&O(Nr^2),\\
 C_\ell=R_\ell P_\ell^\top&: &&O(Ndr),\\
 B_\ell^\star&: &&O(r^3+dr^2).
\end{align}
The low-rank feedback cost uses
$F_\ell e=U_k(\Sigma_k(V_k^\top e))$. The exact head error additionally applies $W_u^\top$ once per top error; this shared head computation is not multiplied by the number of blocks.

Beyond the frozen checkpoint and ordinary inference buffers, the persistent Stage~II storage is
\begin{equation}
 O\bigl(Lr^2+Ldr+Ldk\bigr),
\end{equation}
up to constant factors for the fixed projections, adapters, and two feedback factors. The sufficient-statistic term does not grow with the total number of processed tokens, although batch size and sequence length still affect transient forward buffers.

The GPU peak in Table~2 is measured over the full LoCA run and includes the temporary backward graph used in Stage~I. The CPU steady-state memory and per-pass time describe Stage~II after calibration. These are different measurement boundaries and should not be combined into a claim that the complete method is forward-only or that calibration has zero cost.

Adapter-MeZO has the lowest memory in the table because it stores neither a backward graph nor ridge statistics. In the reported runs, it instead uses $10^3$--$10^4$ perturbation steps. LoCA uses at most $40$ outer iterations, but still pays for calibration and for selecting $\eta$ and $\lambda$. The measurements support the stated memory and per-pass comparisons; because calibration latency was not separately recorded, they do not establish a universal end-to-end speed advantage.

\section{Experiment 3: Scale-Normalized Cross-Family Transfer}

This section corresponds to ``Experiments--Cross-Family Results'' and Table~3. It explains exactly what transfers across model sizes and what is still selected on held-out data.

The normalized experiment reuses the candidate set
\begin{equation}
 \eta\in\{0.003,0.01,0.03\}
\end{equation}
for each model reported in Table~3 of the main paper. A value is still selected from this set by held-out CE. Thus the candidate range is reused without redesign, but one fixed value is not used without validation.

On Qwen2.5 SST-2, the normalized candidate set yields the recoveries reported in Table~3 across four model sizes. The comparison with separately tuned absolute coefficients tests whether relative residual-stream change is more portable than an absolute hidden-state correction. It does not prove scale invariance outside the tested models.

The SmolLM2-1.7B rows provide the cross-family check. Reusing the absolute coefficient gives recovery between $0$ and $0.07$ on the four reported tasks, whereas the normalized target gives recovery between $0.93$ and $1.29$. This supports the relative target parameterization on a second family. It does not make the method hyperparameter-free or establish transfer to arbitrary architectures.

\section{Precise Boundary of the Forward-Only Claim}

After calibration, assume that the following objects are available on the adaptation device: the frozen checkpoint, fixed projections $A_\ell$, low-rank feedback factors for $F_\ell$, and the current adapters $B_\ell$. Then one outer iteration consists of:
\begin{enumerate}[leftmargin=2em]
 \item a forward pass through the adapted frozen model;
 \item evaluation of~\eqref{eq:supp-top-error} at the frozen head;
 \item application of $F_\ell e$ using fixed low-rank factors;
 \item streaming accumulation of $G_\ell$ and $C_\ell$;
 \item positive-definite linear solves for $B_\ell$.
\end{enumerate}
None of these steps differentiates through a backbone block. Therefore the adaptation loop is backward-free after calibration. The complete method is not backpropagation-free, because producing the calibrated feedback maps requires the one probe backward pass described in Section~S5. A backward-capable host may perform calibration and transfer the resulting maps to an inference-oriented adaptation device, provided the checkpoint and calibration assumptions match.

\section{MeZO Comparison and Full-Parameter Context}

This section corresponds to the Adapter-MeZO discussion in ``Related Work,'' ``Experiments--Setup,'' and the Discussion section. It explains why Adapter-MeZO is the matched baseline and reports the separate full-parameter MeZO scaling observation used in the main paper.

Adapter-MeZO is the main comparison because it perturbs the same low-rank adapter parameterization as LoCA and LoRA. The methods therefore act on comparable parameter spaces even though they obtain their updates differently: Adapter-MeZO uses random function-value perturbations, whereas LoCA solves the current local ridge objectives. In the reported runs, Adapter-MeZO has the smallest memory footprint but uses $10^3$--$10^4$ perturbation steps and a learning-rate search at each scale.

\begin{figure}[H]
  \centering
  \includegraphics[width=0.95\textwidth,trim=0 0 0 25,clip]{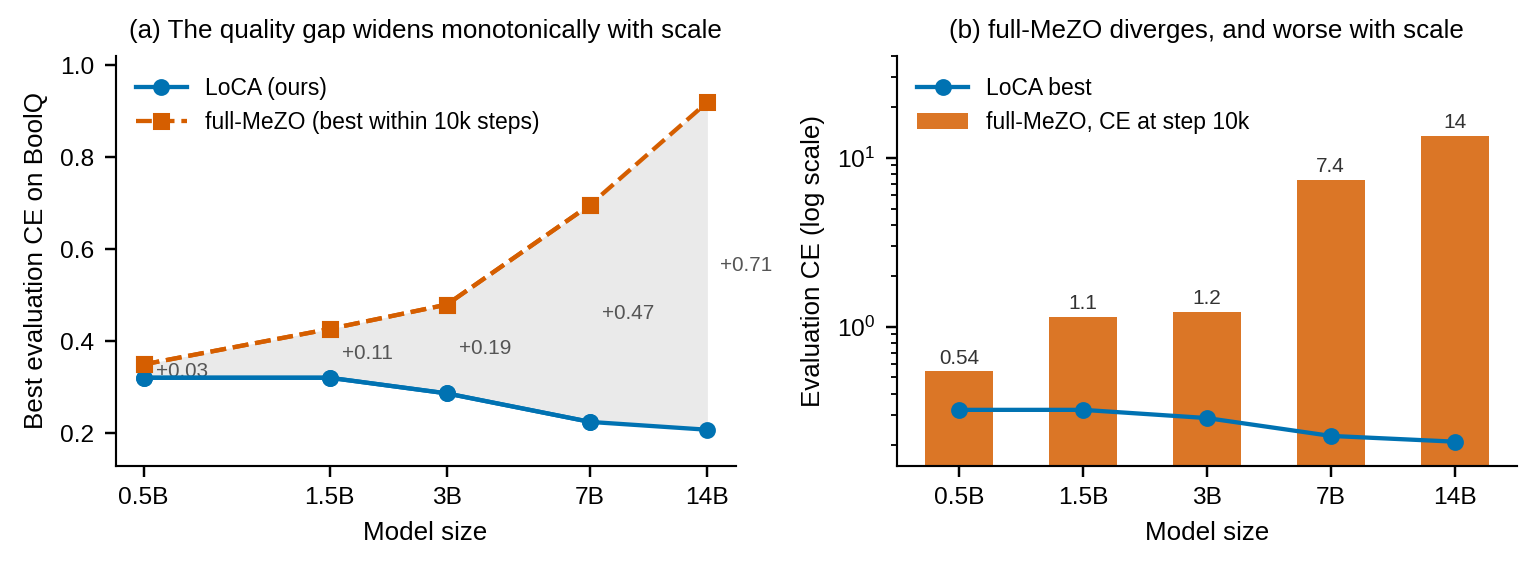}
  \caption{Full-parameter MeZO on BoolQ across the tested model sizes. Left: best evaluation CE within the reported runs. Right: evaluation CE at step 10{,}000. The full-parameter result worsens with scale in this setting, motivating the matched Adapter-MeZO baseline in the main benchmark.}
  \label{fig:full_mezo}
\end{figure}

Figure~\ref{fig:full_mezo} is limited to BoolQ on the tested Qwen2.5 sizes. It does not establish algorithmic divergence, a long-context result, or a general claim about all tasks or zeroth-order methods.

\end{document}